\newtheorem{problem}{Problem}[section]
\DeclareMathOperator*{\argmax}{arg\,max}
\newcommand*{\qed}{\hfill\ensuremath{\square}}%
\begin{document}

\title{\Large Human-guided Data Exploration Using Randomisation\thanks{Supported by the Academy of Finland (decisions 319145 and 313513).}}
\author{
  Kai Puolam\"aki\thanks{Department of Computer Science, University of Helsinki, Finland. Email: firstname.lastname@helsinki.fi}  \hspace*{2em} Emilia Oikarinen$^\dag$ \hspace*{2em} Buse
  Atli\thanks{Department of Computer Science, Aalto University, Finland.  Email: firstname.lastname@aalto.fi} \hspace*{2em} Andreas Henelius$^\dag$ \\
 \\
}
\date{}

\maketitle

\fancyfoot[R]{\scriptsize{Copyright \textcopyright\ 2018\\
Copyright for this paper is retained by authors}}

\begin{abstract} \small\baselineskip=9pt An explorative data analysis
  system should be aware of what the user already knows and what the
  user wants to know of the data: otherwise the system cannot provide
  the user with the most informative and useful views of the data.  We
  propose a principled way to do exploratory data analysis, where the
  user's background knowledge is modeled by a distribution
  parametrised by subsets of rows and columns in the data, called
  tiles. The user can also use tiles to describe his or her interests
  concerning relations in the data. We provide a computationally
  efficient implementation of this concept based on constrained
  randomisation. The implementation is used to model both the
  background knowledge and the user's information request and is a
  necessary prerequisite for any interactive system. Furthermore, we
  describe a novel linear projection pursuit method to find and show
  the views most informative to the user, which at the limit of no
  background knowledge and with generic objectives reduces to PCA. We
  show that our method is robust under noise and fast enough for
  interactive use. We also show that the method gives understandable
  and useful results when analysing real-world data sets. We will
  release an open source library implementing the idea, including the
  experiments presented in this paper.  We show that our method can
  outperform standard projection pursuit visualisation methods in
  exploration tasks.  Our framework makes it possible to construct
  human-guided data exploration systems which are fast, powerful, and
  give results that are easy to comprehend.

\end{abstract}

% ------------------------------------------------------------
\section{Introduction}
% ------------------------------------------------------------
Exploratory data analysis \cite{tukey:1977}, often performed
interactively, is an established technique for learning about
relations in a dataset prior to more formal analyses. Humans can
easily identify patterns that are relevant for the task at hand but
often difficult to model algorithmically. Current visual exploration
systems lack a principled approach for this process. \textbf{Our goal
  and main contribution} in this paper is to \emph{devise a framework
  for human-guided data exploration by modeling the user's background
  knowledge and objectives and using these to offer the user the most
  informative views of the data.}.
\begin{figure*}
\begin{footnotesize}
\begin{tabular}{ccc}
\includegraphics[width=0.3\textwidth]{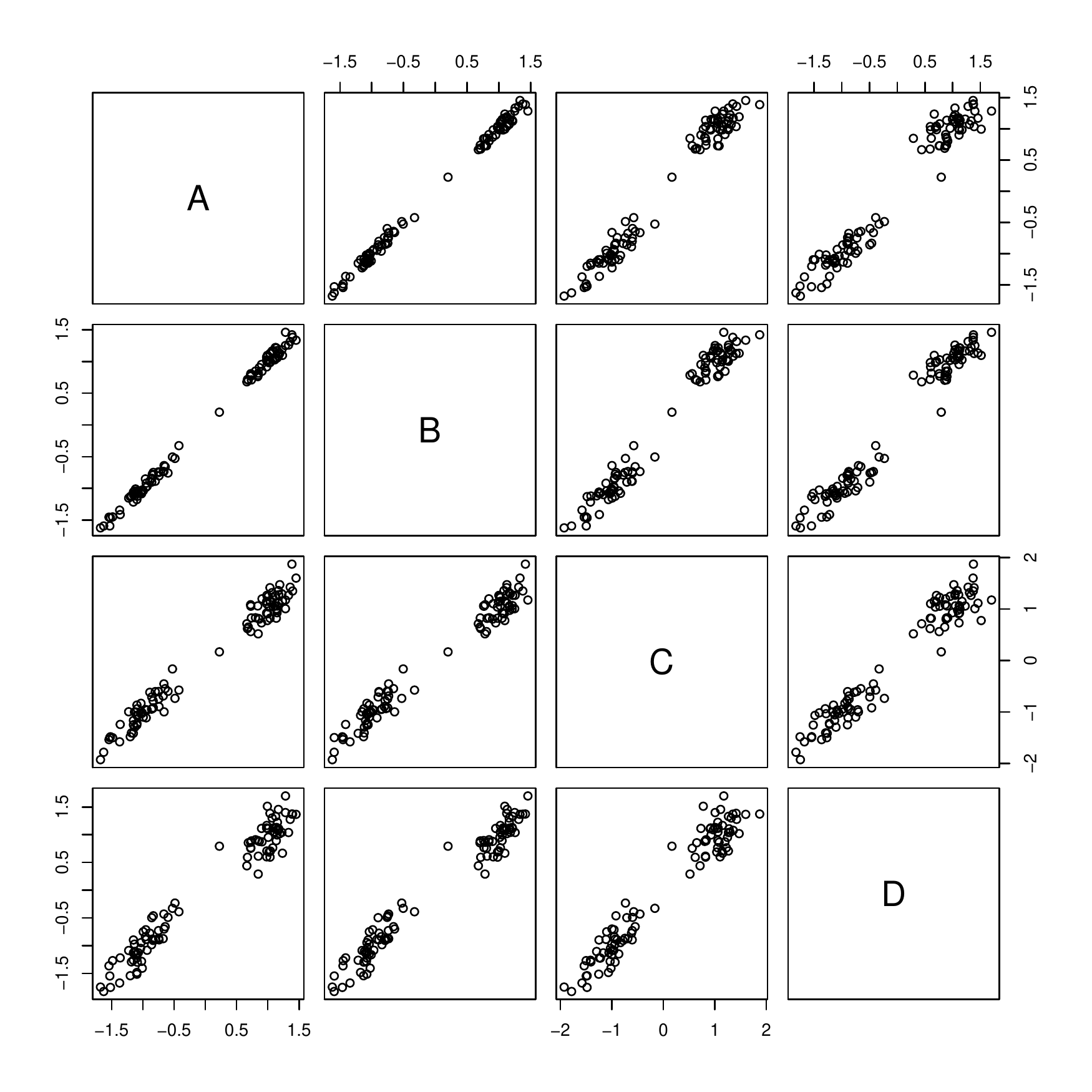}&
\includegraphics[width=0.3\textwidth]{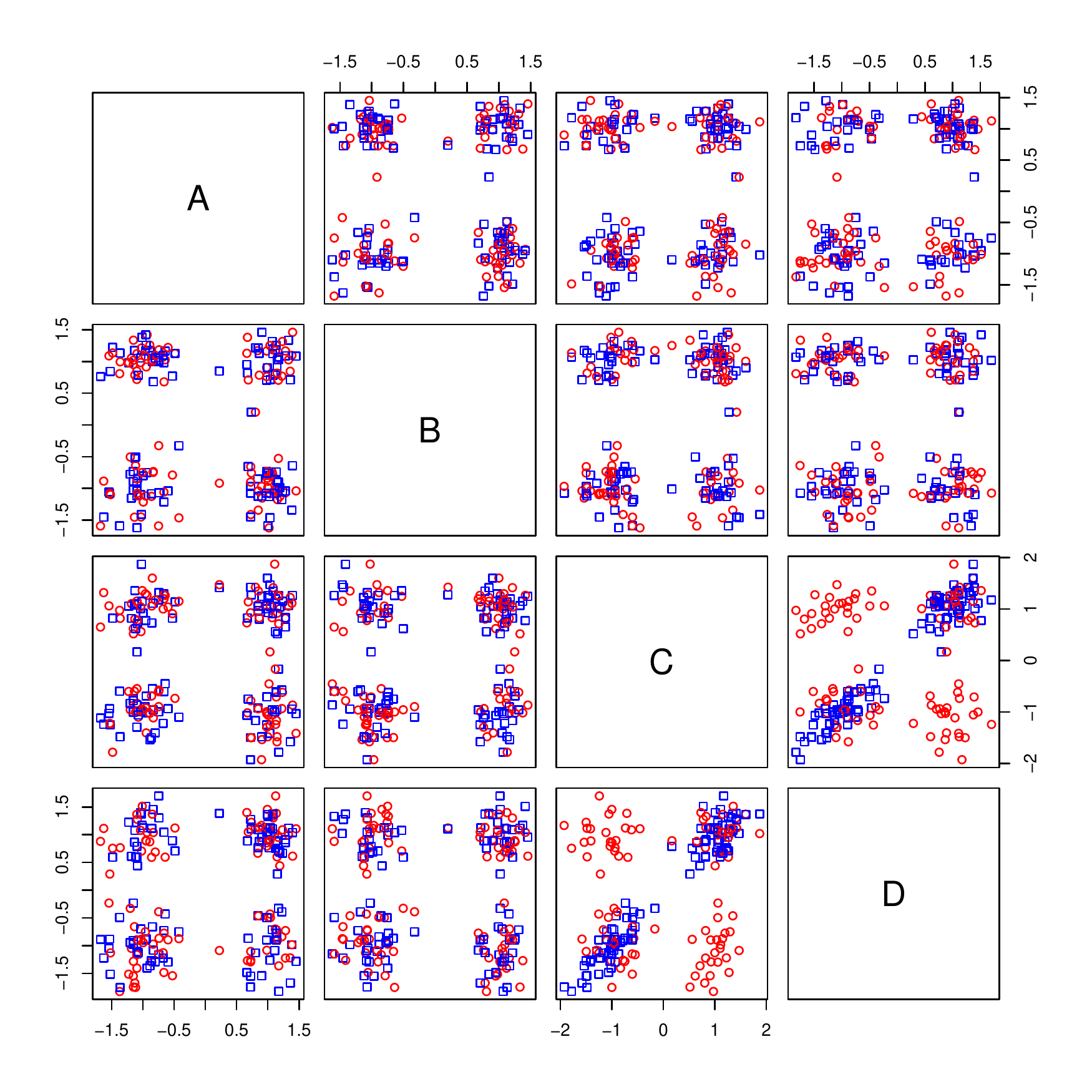}&
\includegraphics[width=0.3\textwidth]{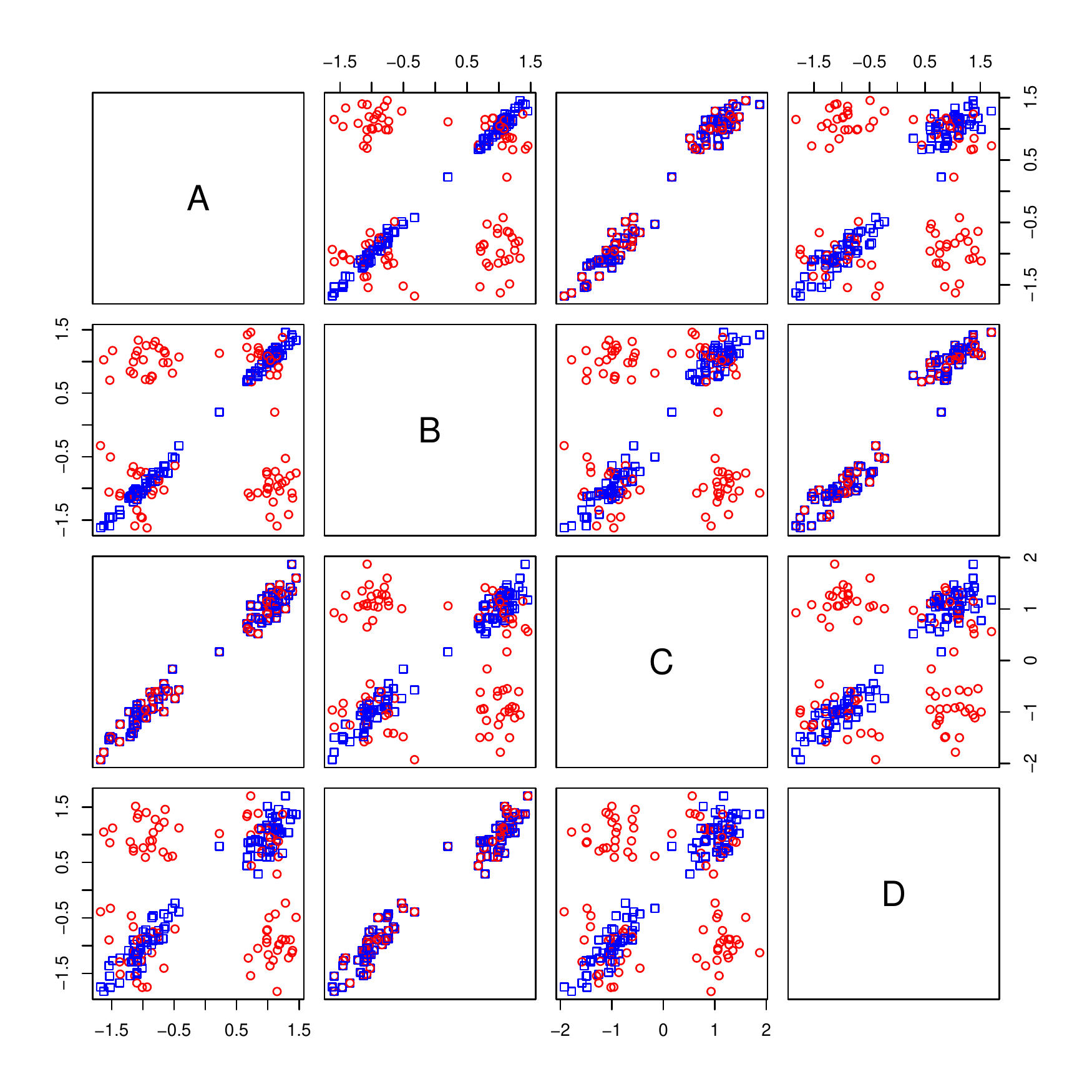}\\
 (a)&(b)&(c)\\
\end{tabular}
\end{footnotesize}
\caption{\label{fig:toy} (a) Pairplot matrix of toy data. (b) Fully
  randomised data (red spheres) that models user's knowledge and data
  where only relation between attributes $CD$ has been preserved (blue
  squares) that models what the user could learn of the relations
  between attributes $CD$. (c) As
  in (b), but additionally relation between attributes $AC$, as well
  as between attributes $BD$, has been preserved, modeling user's
  knowledge of the relations between $AC$ and $BD$, respectively.}
\end{figure*}

Our contribution consists of two main parts: (i) a framework to model
and incorporate the user's background knowledge of the data and to
express the user's objectives, and (ii) a system to show the most
informative views of the data. The first contribution is general, but
the second contribution, the most informative views of the data, is
specific to a particular data type. In this paper we focus on data
items that can be represented as real-valued vectors of attribute
values.

As an example, consider the 4-dimensional toy data set shown in
Fig.~\ref{fig:toy}a. The dataset has been generated by first creating
the strongly correlated attributes $A$ and $B$, and then generating
attribute $C$ by adding noise to $A$, and attribute $D$ by adding
noise to $B$. The purpose of this example is to show how the user's
background knowledge and objectives affect the views that are most
informative to the user.

Assume the {\em user is interested in the relation of attributes $C$
  and $D$}; we call the relation of interest a {\em hypothesis}. Our
task is to find a maximally informative 1-dimensional projection of
the data that takes both this objective and the user's background
knowledge into account.\footnote{It is not usually possible for the
  user to view the whole distribution at once, hence, it is necessary,
  e.g., to view projections of the data.}

First, assume that the user knows only the marginal distributions of
the attributes but nothing of their relations. We argue that in such a
case the user's internal model of the data can be modeled by a
distribution over data sets that we call the {\em background
  distribution}, which in this case can be sampled from by permuting
the columns of the data matrix at random, shown by the red spheres in
Fig. \ref{fig:toy}b. Because we are interested in the relation of $CD$
we create another distribution in which $CD$ are permuted together but
the data is otherwise randomised, as shown by the blue squares in
Fig. \ref{fig:toy}b. The red distribution models what the user knows
and the blue what the user could optimally learn about the relation of
$CD$ from the data. The red and blue distributions differ most in the
plot $CD$---as one would expect!---and indeed the maximally
informative 1-dimensional projection is given by $C+D$.\footnote{See
  Eq. \eqref{eq:gain} for a formal definition of informativeness used
  in this paper.}

Secondly, assume that---unlike above---the user already knows the
relationship of attributes $AC$ and $BD$, but does not yet know that
attributes $AB$ are almost identical. We can repeat the previous
exercise with the difference that we now add the user's knowledge as
constraints to {\em both} the red {\em and} blue distributions, i.e.,
we permute $AC$ and $BD$ together (modeling the user's knowledge of
these relations), as shown in Fig. \ref{fig:toy}c. Again, the red
distribution models the user's knowledge and the blue what the user
could learn from the relation of $CD$ from the data, given that the
user already knows the relationships of $AC$ and $BD$. The red and
blue distributions differ most in plot $AB$ and therefore the user
would gain most information if shown this view; indeed, the most
informative projection is $A+B$. In other words, the knowledge of the
relation of $A$ and $B$ gives maximal information about the relation
of $CD$! This makes sense, because variables $CD$ are really connected
via $AB$ through their generative process.

{\bf Background model and objectives.} We model the user's background
knowledge by a distribution over all datasets. We define the
distribution using permutations and constraints on the permutations,
which we call {\em tiles}. A tile is defined by a subset of data items
and attributes. All attributes within a tile are permuted with the
same permutation to conserve the relation between attributes. When the
user has not seen the data, we assume that the background distribution
is unconstrained and can be sampled from by permuting each attribute
of the data at random. The user can input observations of the data
using tiles. Essentially, by constructing a tile the user acknowledges
that he or she knows the relations within the tile.

The tiles can also be used to formalise the user's objectives. For example, if the user is interested in the interaction
between two groups of variables, he or she can define two
distributions using tiles, which we call {\em hypotheses}: one in
which the interaction of interest is preserved and one in which it is
broken. Any difference between these two hypotheses gives the user new
information about the interaction of interest.

{\bf Finding views.} The data and the hypotheses are typically
high-dimensional and it is in practice not possible to view all
relations at once; if it were, the whole problem would be trivial by
just showing the user the entire dataset in one view. As a
consequence, we need to construct a visualisation or a dimensionality
reduction method that shows the most informative view (defined in
Sec.~\ref{sec:methods}) of the differences between the hypotheses. We
introduce a linear projection pursuit method that finds a projection
in a direction in which the two hypotheses differ most. The proposed
method seeks directions in which the ratio of the variance of these
two distributions is maximal. At the limit of no background
information and most generic hypotheses the method reduces to PCA
(Thm.~\ref{thm:pca}).

The domain of interactive data exploration sets some further {\bf
  requirements} for any implementation. On one hand, our system has no
need of scaling to a huge number of data points, since visualising an
extremely large number of points makes no sense. In practice, if the
number of data points is large enough, we can always downsample the
data to a manageable size. Therefore, our system has essentially a
constant time complexity with respect to the number of data items, but
not to the number of attributes, as shown later in
Sec.~\ref{sec:experiments}. On the other hand, the response times must
be on the order of seconds for fluid interaction. This rules out many
slow to compute but otherwise sound approaches.

In summary, the {\bf contributions} of this paper are: (i) a
computationally efficient formulation and implementation of the user's
background model and hypotheses using constrained randomisation, (ii)
a dimensionality reduction method to show the most informative view to
the user, and (iii) an experimental evaluation that supports that our
approach is fast, robust, and produces easily understandable results.
The Appendix contains an algorithm for merging tiles
(Sec.~\ref{sec:merge}), and an example demonstrating exploration of
the German data (Secs.~\ref{sec:german1} and \ref{sec:german2})

% ------------------------------------------------------------
\section{Methods}\label{sec:methods}
% ------------------------------------------------------------

Let $X$ be an $n \times m$ data matrix (dataset). Here $X(i,j)$
denotes the $i$th element in column $j$. Each column $X(\cdot, j),\ j
\in [m]$, is an \emph{attribute} in the dataset, where we used the
shorthand $[m] = \{1, \ldots, m\}$. Let $D$ be a finite set of domains
(e.g., continuous or categorical) and let $D(j)$ denote the domain of
$X(\cdot, j)$. Also let $X(i, j) \in D(j) \textrm{ for all } i \in
[n],\ j \in [m]$, i.e., all elements in a column belong to the same
domain, but different columns can have different domains. The
derivation in Sec.~\ref{sec:bg} is generic, but in Sec.~\ref{sec:view}
we consider only real numbers, $D(j) \subseteq \mathbb{R}$ for all
$j\in[m]$.

\subsection{Background model and tile constraints}\label{sec:bg}
In this subsection, we introduce the permutation-based sampling method
and tiles which can be used to constrain the sampled distribution and
to express the user's background knowledge and objectives
(hypotheses). The sampled distribution is constructed so that in the
absence of constraints (tiles) the marginal distributions of the
attributes are preserved.

We define a \emph{permutation} of the data matrix $X$ as follows.
\begin{Definition}[Permutation]\label{def:permutation}{
  Let ${\mathcal P}$ denote the set of permutation functions of length
  $n$ such that \\
  $\pi:[n]\mapsto[n]$ is a bijection for all
  $\pi\in{\mathcal P}$, and denote by
  $\left(\pi_1,\ldots,\pi_m\right)\in{\mathcal P}^m$ the vector of
  column-specific permutations. A permutation $\widehat X$ of the data
  matrix $X$ is then given as $\widehat X(i,j)=X(\pi_j(i),j)$.}
\end{Definition}
When permutation functions are sampled uniformly at random, we obtain
a uniform sample from the distribution of
datasets where each of the attributes has the same marginal
distribution as the original data. We parametrise this distribution with
\emph{tiles} that preserve the relations in the data matrix $X$ for a
subset of rows and columns: a tile is a tuple $t = (R, C)$, where $R
\subseteq [n]$ and $C \subseteq [m]$.  The tiles considered here are
combinatorial (in contrast to geometric), meaning that rows and
columns in the tile do not need to be consecutive.  In an
unconstrained case, there are $(n!)^m$ allowed vectors of
permutations. The tiles constrain the set of allowed permutations as
follows.
\begin{Definition}[Tile constraint]
  \label{def:tileconstraint}
  Given a tile \\
  $t = (R,C)$, the vector of permutations
  $\left(\pi_1,\ldots,\pi_m\right)\in{\mathcal P}^m$ is allowed by $t$
  iff the following condition is true for all $i\in[n]$, $j\in[m]$,
  and $j'\in[m]$:
$$
i\in R\wedge j,j'\in C  \\
\implies \pi_j(i)\in R\wedge \pi_j(i)=\pi_{j'}(i)
.$$
Given a set of tiles $T$, a set of
permutations is allowed iff it is allowed by all $t\in T$.
\end{Definition}
A tile defines a subset of rows and columns, and the rows in this
subset are permuted by the same permutation function in each column in
the tile. In other words, the \emph{relations between the columns
  inside the tile are preserved}.  Notice
that the identity permutation is always an allowed permutation.
Now, the sampling problem can be formulated as follows.
\begin{problem}[Sampling problem]
\label{def:samplingproblem}
Given a set of tiles $T$,  draw samples
uniformly at random from vectors of permutations in ${\mathcal P}^m$ allowed by $T$.
\end{problem}
The sampling problem is trivial when the tiles are non-overlapping,
since permutations can be done independently within each
non-overlapping tile. However, in the case of overlapping tiles,
multiple constraints can affect the permutation of the same subset of
rows and columns and this issue must be resolved. To this end, we need
to define the equivalence of two sets of tiles, which means that the
same constraints are enforced on the permutations.
\begin{Definition}[Equivalence of sets of tiles]
\label{def:equivalence}
Let $T$ and $T'$ be two sets of tiles. $T$ is \emph{equivalent} to
$T'$, if $P$ is allowed by $T$ iff $P$ is allowed by $T'$ for all
vectors of permutations $P\subseteq \mathcal{P}^m$.
\end{Definition}

We say that a set of tiles $\mathcal{T}$ where no tiles overlap, is a
\emph{tiling}.  Next, we show that there always exists a tiling
equivalent to a set of tiles.
\begin{theorem}
Given a set of (possibly overlapping) tiles $T$, there exists a tiling $\mathcal{T}$ that
 is equivalent to $T$.
\end{theorem}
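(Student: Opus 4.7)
My plan is to prove this constructively by an iterative replacement algorithm. While $T$ contains two overlapping tiles $t_1 = (R_1,C_1)$ and $t_2 = (R_2,C_2)$ (``overlap'' meaning both $R_1\cap R_2\neq\emptyset$ and $C_1\cap C_2\neq\emptyset$, since otherwise the constraints are independent), I replace them with the three tiles
$$(R_1\cap R_2,\ C_1\cup C_2),\quad (R_1\setminus R_2,\ C_1),\quad (R_2\setminus R_1,\ C_2),$$
discarding any whose row set is empty. These three new tiles have pairwise disjoint row sets, so internally there is no overlap, and if the process terminates the resulting set is a tiling.

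The crux is showing that each replacement preserves the set of allowed permutations. The easy direction is that the three new tiles together imply $t_1$ and $t_2$: for $i\in R_1$, split on whether $i\in R_1\cap R_2$ (use the merged tile) or $i\in R_1\setminus R_2$ (use the second tile), and analogously for $t_2$. The harder direction is showing $\{t_1,t_2\}$ implies the merged tile. For $i\in R_1\cap R_2$ I pick any $j^*\in C_1\cap C_2$ (nonempty by the overlap assumption); for any $j\in C_1$, tile $t_1$ gives $\pi_j(i)=\pi_{j^*}(i)$, while tile $t_2$ applied at $j^*$ gives $\pi_{j^*}(i)\in R_2$, and together with $\pi_j(i)\in R_1$ we conclude $\pi_j(i)\in R_1\cap R_2$; the symmetric argument handles $j\in C_2$, and transitivity through $j^*$ gives agreement across all of $C_1\cup C_2$. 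A short finite-set bijection argument then gives the constraint for $(R_1\setminus R_2, C_1)$: since $\pi_j$ restricted to $R_1$ is a bijection $R_1\to R_1$ that maps $R_1\cap R_2$ into (hence onto) itself, it must also map $R_1\setminus R_2$ onto itself.

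For termination I use the monovariant $\Phi(T)=\sum_{t\in T}|R(t)|\cdot|C(t)|$. A direct expansion shows the replacement changes $\Phi$ by exactly $-|R_1\cap R_2|\cdot|C_1\cap C_2|$, a strictly negative integer whenever the two tiles overlap; since $\Phi$ is a bounded nonnegative integer, the process halts after finitely many steps. I expect the main obstacle to be the bookkeeping in the equivalence argument, in particular carefully checking the degenerate cases ($R_1\subseteq R_2$, $C_1\subseteq C_2$, or equality), which make one or two of the three new tiles empty and hence droppable, but must be handled explicitly so that both directions of the equivalence still go through cleanly.
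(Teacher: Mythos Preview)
Your proposal is correct and follows essentially the same route as the paper: the identical three-tile replacement $(R_1\cap R_2,\,C_1\cup C_2)$, $(R_1\setminus R_2,\,C_1)$, $(R_2\setminus R_1,\,C_2)$, applied iteratively until no overlaps remain. You are in fact more rigorous than the paper, which asserts the equivalence of the replacement and the termination of the iteration without justification; your pivot-column argument through $j^*\in C_1\cap C_2$ and your monovariant $\Phi(T)=\sum_t |R(t)|\,|C(t)|$ (which indeed drops by exactly $|R_1\cap R_2|\cdot|C_1\cap C_2|>0$ at each step) fill precisely those gaps.
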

\begin{proof}
Let $t_1 = (R_1, C_1)$ and $t_2 = (R_2, C_2)$ be two overlapping
tiles.
Each tile describes a set of
constraints on the allowed permutations of the rows in their
respective column sets $C_1$ and $C_2$. A  tiling $\{t'_1, t'_2, t'_3\}$  equivalent to $\{t_1,t_2\}$ is given by:
$$\begin{array}{rcl}
  t'_1& = &(R_1\setminus R_2, C_1), \\
  t'_2 &= &(R_1\cap R_2, C_1\cup C_2), \\
  t'_3 &= &(R_2\setminus R_1, C_2).
\end{array}$$
\noindent Tiles $t'_1$ and $t'_3$
represent the non-overlapping parts of $t_1$ and $t_2$ and the
permutation constraints by these parts can be directly met. Tile
$t'_2$ takes into account the combined effect of $t_1$ and $t_2$ on
their intersecting row set, in which case the same permutation
constraints must apply to the union of their column sets. It follows
that these three tiles are non-overlapping and enforce the combined
constraints of tiles $t_1$ and $t_2$. Hence, a tiling can be
constructed by iteratively resolving overlap in a set of tiles until
no tiles overlap. \qed
\end{proof}
Notice that merging overlapping tiles leads to
wider (larger column set) and lower (smaller row set) tiles. The
limiting case is a fully-constrained situation where each row is a
separate tile and only the identity permutation is allowed. We provide an efficient algorithm for merging tiles in Appendix \ref{sec:merge}.

\subsection{Formulating hypotheses}
\label{sec:formulatinghypotheses}
Our goal is to compare two distributions and we constrain the
distributions in question by forming hypotheses. Tilings are used to
form the hypotheses. The so-called {\em hypothesis tilings} provide a
flexible method for the user to specify the relations in which he or
she is interested.
\begin{Definition}[Hypothesis tilings]\label{def:hypothesis}
Given a subset of rows $R\subseteq[n]$, a subset of columns
$C\subseteq[m]$, and a $k$-partition of the columns given by
$C_1,\ldots,C_k$, such that $C=\cup_{i=1}^k{C_k}$ and $C_i\cap
C_j=\emptyset$ if $i\ne j$, a pair of \emph{hypothesis tilings} is
given by $\mathcal{T}_{H_1}=\{(R,C)\}$ and
$\mathcal{T}_{H_2}=\cup_{i=1}^k{\{(R,C_i)\}}$, respectively.
\end{Definition}
The hypothesis tilings define the items $R$ and attributes $C$ of
interest and the relations between the attributes that the user is
interested in (through the partition of $C$).  \textsc{Hypothesis 1}
($\mathcal{T}_{H_1}$) corresponds to a hypothesis where all relations
in $(R,C)$ are preserved, and \textsc{hypothesis 2}
($\mathcal{T}_{H_2}$) to a hypothesis where there are no unknown
relations between attributes in the partitions $C_1,\ldots,C_k$ of
$C$.

For example, if the columns are partitioned into two groups $C_1$ and
$C_2$ the user is interested in relations \emph{between} the
attributes in $C_1$ and $C_2$, but not in relations \emph{within}
$C_1$ or $C_2$. On the other hand, if the partition is full, i.e.,
$k=|C|$ and $|C_i|=1$ for all $i\in [k]$, then the user is interested
in \emph{all} relations between the attributes. In the latter case,
the special case of $R=[n]$ and $C=[m]$ indeed reduces to {\bf unguided
data exploration}, where the user has no background knowledge and the
hypothesis covers all inter-attribute relations in the data.

The user's knowledge concerning relations in the data is described by
tiles as well. As the user views the data she or he can highlight
relations observed by tiles. For example, the user can mark an
observed cluster structure with a tile involving the data points in
the cluster and the relevant attributes. We denote the set of
user-defined tiles by ${\mathcal T}_u$. In our general framework, the
user compares
two distributions characterised by the tilings
$\mathcal{T}_u + \mathcal{T}_{H_1}$ and
$\mathcal{T}_u + \mathcal{T}_{H_2}$, respectively.
Here `$+$' is used with a slight abuse of notation to denote the
operation of merging tilings into an equivalent tiling. By
$\mathcal{H}$ we denote the pair of hypotheses
\begin{equation}\mathcal{H} =\langle \mathcal{T}_u + \mathcal{T}_{H_1}, \mathcal{T}_u
+ \mathcal{T}_{H_2}\rangle.
\end{equation}
Note that $\mathcal{H}$ specifies two distributions over datasets,
both parametrised by their respective tilings, from which we can draw
samples as described in Sec.~\ref{sec:bg}.

\subsection{Finding views}\label{sec:view}
We are now ready to formulate our second main problem, i.e., given two
distributions characterised by the hypothesis pair $\mathcal{H}$, how
can we find an {\em informative view} of the data maximally contrasting
these two distributions. The answer to this question
depends on the type of data and the visualisation selected. For example,
the visualisations or measures of difference are different for
categorical and real-valued data. The {\em real-valued data} discussed
in this paper allows us to use projections (such as principal
components) that mix attributes.
\begin{problem}[Comparing hypotheses]
\label{prob:comparinghypotheses}
Given two distributions characterised by the pair
$\mathcal{H} =\langle \mathcal{T}_u + \mathcal{T}_{H_1}, \mathcal{T}_u
+ \mathcal{T}_{H_2}\rangle$,
where $\mathcal{T}_u $ is a (user-defined) background model tiling,
and $\mathcal{T}_{H_1}$ and $\mathcal{T}_{H_2}$ are hypothesis
tilings, find the projection in which the distributions differ the
most.
\end{problem}

To formalise the optimisation criterion in
Prob. \ref{prob:comparinghypotheses},
we define a {\em gain function}
$G(u,\mathcal{H})$ by
\begin{equation}\label{eq:g}G(u,\mathcal{H})=u^T\Sigma_1u/u^T\Sigma_2u,\end{equation}
where $\Sigma_1$ and $\Sigma_2$ are the covariance matrices of
the distributions parametrised by
the tilings $ \mathcal{T}_u + \mathcal{T}_{H_1}$ and
$\mathcal{T}_u + \mathcal{T}_{H_2}$, respectively,
and $u$ is the projection direction. The covariance
matrices $\Sigma_1$ and $\Sigma_2$
can be found analytically by the following theorem.
\begin{theorem}
Given
$j,j'\in[m]$, the covariance of attributes $\mathrm{cov}(j,j')$
under the
distribution defined by the tiling $\mathcal{T}$ is given by
$\mathrm{cov}(j,j')=\sum\nolimits_{i=1}^n{a_j(i)a_{j'}(i)}/n$, where
\begin{equation*}
a_l(i)=\left\{
\begin{array}{ll}
Y(i,l),                                      &  i \in  R_{j,j'} \\
\sum\nolimits_{k\in R(i,l)}{Y(k,l)}/|R(i,l)|,  & i \notin  R_{j,j'}
\end{array}
\right.
\end{equation*}
and $l\in\{j,j'\}$.
We denote by
$R_{j,j'}=\{i\in[n] \mid \exists (R,C)\in\mathcal{T}\ldotp i\in R \wedge
j,j'\in C\}$
the set of rows permuted together, by
$Y(i,l)=X(i,l)-\sum\nolimits_{i=1}^n{X(i,l)}/n$ the centered data
matrix, and by $R(i,l)\subseteq[n]$ a set satisfying
$\exists C\subseteq[m]\ldotp(R(i,l),C)\in\mathcal{T}\wedge i\in R(i,l)\wedge  l\in C$, i.e.,
the rows in a tile that data point $X(i,l)$ belongs to.
\end{theorem}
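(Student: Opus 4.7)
The plan is to write the covariance under the distribution as
$\mathrm{cov}(j,j')=\frac{1}{n}\sum_{i=1}^n E[Y(\pi_j(i),j)\,Y(\pi_{j'}(i),j')]$,
which is valid because any permutation preserves the column mean, so the centring used in $Y$ remains correct for the permuted matrix. The task is then to evaluate, for each row $i$, the joint expectation of $(\pi_j(i),\pi_{j'}(i))$ under the uniform distribution over permutations allowed by $\mathcal{T}$, and the natural split is the one that appears in the statement: whether $i\in R_{j,j'}$ or not.

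For $i\in R_{j,j'}$, there is (uniquely, because $\mathcal{T}$ is a tiling) a tile $(R,C)\in\mathcal{T}$ with $i\in R$ and $j,j'\in C$. Definition \ref{def:tileconstraint} forces $\pi_j(i)=\pi_{j'}(i)$, and the shared restriction of the permutation to $R$ is a uniformly random bijection of $R$, so this common index is uniform on $R$. Summing within one such tile gives
\begin{equation*}
\sum_{i\in R} E[Y(\pi_j(i),j)Y(\pi_{j'}(i),j')]=|R|\cdot\frac{1}{|R|}\sum_{k\in R}Y(k,j)Y(k,j')=\sum_{i\in R}Y(i,j)Y(i,j').
\end{equation*}
Because distinct tiles in the tiling are disjoint on any column, the row sets of tiles whose column set contains both $j$ and $j'$ partition $R_{j,j'}$, and the contributions collect to $\sum_{i\in R_{j,j'}}Y(i,j)Y(i,j')$, which is exactly $\sum_{i\in R_{j,j'}} a_j(i)a_{j'}(i)$ with the first branch of the piecewise definition.

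For $i\notin R_{j,j'}$, the indices $\pi_j(i)$ and $\pi_{j'}(i)$ are governed by two different tiles (the unique tiles containing $(i,j)$ and $(i,j')$, treating any rows not explicitly covered in a column as forming an implicit free tile in that column). Since the permutations attached to distinct tiles are independent uniform bijections on their respective row sets, $\pi_j(i)$ is uniform on $R(i,j)$, $\pi_{j'}(i)$ is uniform on $R(i,j')$, and the two are independent. The expectation factorises into
$\left(\frac{1}{|R(i,j)|}\sum_{k\in R(i,j)}Y(k,j)\right)\left(\frac{1}{|R(i,j')|}\sum_{k\in R(i,j')}Y(k,j')\right)$,
which matches $a_j(i)a_{j'}(i)$ under the second branch. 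Combining both cases and dividing by $n$ yields the claimed identity.

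The main obstacle is the bookkeeping around independence: one must justify that the permutations attached to different tiles in $\mathcal{T}$ can be treated as independent uniformly random bijections of their row sets, and that rows not appearing in any tile of a given column behave as a single "free" tile to which the same formula applies. Both facts follow from the tiling property established in the previous theorem (no row--column cell lies in more than one tile), after which each of the two cases reduces to a one-line expectation computation.
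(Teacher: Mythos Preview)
Your proposal is correct and follows essentially the same route as the paper: write the covariance as $\frac{1}{n}\sum_i E[Y(\pi_j(i),j)Y(\pi_{j'}(i),j')]$, split on $i\in R_{j,j'}$ versus $i\notin R_{j,j'}$, use $\pi_j(i)=\pi_{j'}(i)$ plus a reindexing argument in the first case, and independence of the two tile-permutations to factor the expectation in the second. Your version is slightly more explicit than the paper's about the tiling bookkeeping (disjointness, the implicit free tile for uncovered rows), but the structure and the key steps are the same.
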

\begin{proof}
  The covariance is defined by
  $$\mathrm{cov}(j,j')=
  E\left[\sum\nolimits_{i=1}^n{Y(\pi_j(i),j)Y(\pi_{j'}(i),j')}/n \right],$$
  where the expectation $E[\cdot]$ is defined over the
  permutations $\pi_j\in{\cal P}$ and $\pi_{j'}\in{\cal P}$ of columns
  $j$ and $j'$ allowed by the tiling, respectively. The part of the sum for rows permuted
  together $R_{j,j'}$ reads
  \begin{multline*}
  \sum\nolimits_{i\in R_{j,j'}}{E\left[Y(\pi_j(i),j)Y(\pi_{j'}(i),j')\right]}/n \\=
  \sum\nolimits_{i\in R_{j,j'}}{Y(i,j)Y(i,j')}/n,
  \end{multline*}
where we have used $\pi_j(i)=\pi_{j'}(i)$ and reordered the sum for
$i\in R_{j,j'}$. The remainder of the sum reads
   \begin{multline*}\sum\nolimits_{i\in
    R_{j,j'}^c}{E\left[Y(\pi_j(i),j)Y(\pi_{j'}(i),j')\right]}/n\\=
\sum\nolimits_{i\in
  R_{j,j'}^c}{E\left[Y(\pi_j(i),j)\right]E\left[Y(\pi_{j'}(i),j')\right]}/n,
   \end{multline*}
where $R_{j,j'}^c=[n]\setminus R_{j,j'}$ and the expectations have been taken independently, because the
 rows in $R_{j,j}^c$ are permuted independently at random. The result then follows from
  the observation that $E\left[Y(\pi_l(i),l)\right]=a_l(i)$ for any $i\in R_{j,j'}^c$.\footnote{We have also verified
experimentally that the analytically
derived covariance matrix matches the covariance matrix estimated from a
sample from the distribution.}  \qed
\end{proof}

Now, the projection in
which the distributions differ most is given by
\begin{equation}
\label{eq:gain}
u_{\mathcal{H}}=\argmax_{u \in \mathbb{R}^m} G(u,\mathcal{H}).
\end{equation}
The vector $u_{\mathcal{H}}$ gives the direction in which the two distributions
differ the most in terms of the variance. Here we could in principle use
some other difference measure as well. We chose the form of Eq.~\eqref{eq:gain}
because it is intuitive and  it can be implemented efficiently, as described in the
following theorem.
\begin{theorem}\label{thm:opt}
The solution to the optimisation problem of Eq.~\eqref{eq:gain} is
given by $u_{\mathcal{H}}=Wv$, where $v$ is the first principal component of
$W^T\Sigma_1W$ and $W\in{\mathbb{R}}^{m\times m}$ is a whitening matrix \cite{doi:10.1080/00031305.2016.1277159}  such that
$W^T\Sigma_2W=I$.
\end{theorem}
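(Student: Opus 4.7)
The plan is to reduce the generalised Rayleigh quotient in Eq.~\eqref{eq:gain} to an ordinary symmetric eigenvalue problem via the change of variables induced by the whitening transformation. Since $\Sigma_2$ is a covariance matrix and the ratio $G(u,\mathcal{H})$ is only well defined when $u^T\Sigma_2 u>0$, we may assume $\Sigma_2$ is positive definite, in which case the whitening matrix $W$ is invertible (e.g.\ taking $W=\Sigma_2^{-1/2}$). Invertibility of $W$ makes $u=Wv$ a bijection of $\mathbb{R}^m$, so the maximum of $G$ over $u$ coincides with the maximum of the transformed quotient over $v$.

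Next I would substitute $u=Wv$ into the expression for $G$. The denominator becomes
$$
u^T\Sigma_2 u = v^T W^T \Sigma_2 W\, v = v^T v,
$$
by the defining property $W^T\Sigma_2 W=I$ of the whitening matrix. The numerator becomes
$$
u^T\Sigma_1 u = v^T\left(W^T\Sigma_1 W\right)v,
$$
so the optimisation reduces to maximising the standard Rayleigh quotient
$$
v^T M v/v^T v, \qquad M := W^T\Sigma_1 W,
$$
over $v\in\mathbb{R}^m\setminus\{0\}$. The matrix $M$ is symmetric because $\Sigma_1$ is symmetric, so the spectral theorem applies.

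By the Courant--Fischer / Rayleigh principle, the maximum of $v^T M v/v^T v$ equals the largest eigenvalue of $M$, and it is attained exactly at (any nonzero multiple of) the corresponding top eigenvector, i.e.\ the first principal component $v$ of $W^T\Sigma_1 W$. Pulling back through the change of variables yields the claim $u_{\mathcal{H}}=Wv$, and since the quotient $G$ is invariant under rescaling of $u$, the normalisation of $v$ does not matter.

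The only real obstacle is therefore the tacit positive-definiteness assumption on $\Sigma_2$, which is needed both for the whitening matrix to exist and for $G$ itself to be well defined; this should be flagged, but beyond it the argument is routine linear algebra once the whitening substitution is made.
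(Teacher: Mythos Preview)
Your proof is correct and follows essentially the same route as the paper: substitute $u=Wv$, use $W^T\Sigma_2W=I$ to turn the generalised Rayleigh quotient into the ordinary one $v^T(W^T\Sigma_1W)v/v^Tv$, and invoke the Rayleigh principle to identify the optimiser as the top eigenvector. Your write-up is in fact a bit more careful than the paper's, since you explicitly justify why the substitution $u=Wv$ is a bijection (via positive-definiteness of $\Sigma_2$ and hence invertibility of $W$), a point the paper leaves implicit.
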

\begin{proof}Using $u=Wv$ we can rewrite
  the gain of Eq.~\eqref{eq:g} as
  \begin{align}\label{eq:Wv}
    G(Wv,\mathcal{H})&=v^TW^T\Sigma_1Wv/v^TW^T\Sigma_2Wv
     \\
    &=v^TW^T\Sigma_1Wv/v^Tv. \nonumber
  \end{align}
Eq.~\eqref{eq:Wv} is maximised when $v$ is the maximal variance
direction of $W^T\Sigma_1W$, from which it follows that the solution
to the optimisation problem of Eq. \eqref{eq:gain} is given by
$u_{\mathcal{H}}=Wv$, where $v$ is the first principal component of
$W^T\Sigma_1W$. \qed
\end{proof}
In visualisations
(when making two-dimensional scatterplots), we project the data to the
first two principal components, instead of considering only the first component.

We note that at the limit of no background knowledge and with the most
general hypotheses, our method reduces to the PCA of the correlation
matrix, as shown by the following theorem.
\begin{theorem}\label{thm:pca}
In the special case of the first step in unguided data exploration,
i.e., comparing a pair of hypotheses specified by
$\mathcal{H} =\langle \emptyset+\mathcal{T}_{H_1}, \emptyset+
\mathcal{T}_{H_2}\rangle$, where $\mathcal{T}_{H_1}=\{([n],[m])\}$
and $\mathcal{T}_{H_2}=\cup_{j=1}^m\{([n],\{j\})\}$, the solution to
Eq.~\eqref{eq:gain} is given by the first principal component of
the correlation matrix of the data when the data has been scaled to unit variance.
\end{theorem}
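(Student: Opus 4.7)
The plan is to compute the two covariance matrices $\Sigma_1$ and $\Sigma_2$ explicitly for the specific tilings in the hypothesis pair $\mathcal{H}$, using the covariance formula from the preceding theorem, and then simply plug the result into Theorem~\ref{thm:opt}. Because $\mathcal{T}_u=\emptyset$ the only tiles in play are those of $\mathcal{T}_{H_1}$ and $\mathcal{T}_{H_2}$, so no tile-merging is needed and the sets $R_{j,j'}$ and $R(i,l)$ can be read off directly. Since the data is scaled to unit variance, the centered matrix $Y$ satisfies $\sum_i Y(i,j)^2/n = 1$ and $\sum_i Y(i,j)=0$ for every $j$.

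First I would handle $\mathcal{T}_{H_1}=\{([n],[m])\}$. Here there is a single tile containing all rows and all columns, so $R_{j,j'}=[n]$ for every pair $(j,j')$. The second branch of the formula for $a_l(i)$ never triggers, so $a_l(i)=Y(i,l)$, and thus
\begin{equation*}
\Sigma_1(j,j') \;=\; \tfrac{1}{n}\sum\nolimits_{i=1}^n Y(i,j)\,Y(i,j'),
\end{equation*}
which is the sample correlation matrix $C$ of the (unit-variance) data.

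Next I would handle $\mathcal{T}_{H_2}=\cup_{j=1}^m\{([n],\{j\})\}$. Now each column lives in its own tile, so for $j\neq j'$ no tile contains both $j$ and $j'$ and therefore $R_{j,j'}=\emptyset$. The formula then uses the second branch with $R(i,l)=[n]$, giving $a_l(i)=\sum_{k=1}^n Y(k,l)/n=0$ by centering; hence the off-diagonal entries of $\Sigma_2$ vanish. For $j=j'$ we have $R_{j,j}=[n]$, so $a_j(i)=Y(i,j)$ and the diagonal entry equals $\sum_i Y(i,j)^2/n=1$. Hence $\Sigma_2=I$.

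Finally, since $\Sigma_2=I$, the whitening matrix of Theorem~\ref{thm:opt} may be taken as $W=I$, and $W^T\Sigma_1 W=\Sigma_1=C$. By Theorem~\ref{thm:opt} the maximiser $u_{\mathcal{H}}=Wv=v$ is the first principal component of $C$, which is exactly the claim. The only mildly delicate step is verifying that the off-diagonal entries of $\Sigma_2$ vanish via the centering identity, so that $\Sigma_2$ is genuinely the identity and not merely diagonal; everything else is direct substitution.
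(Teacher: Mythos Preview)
Your proof is correct and follows the same approach as the paper: observe that $\Sigma_2=I$ (diagonal by independence of the column permutations, identity by unit variance), take $W=I$, and apply Theorem~\ref{thm:opt} to $\Sigma_1$, the data's correlation matrix. The paper merely asserts these observations without computation, whereas you verify them explicitly via the covariance formula, but the argument is the same.
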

\begin{proof}
The proof follows from the observations that for
$\mathcal{T}_{H_2}$ the covariance matrix $\Sigma_2$ is a diagonal
matrix (here a unit matrix),  resulting in the whitening matrix
$W=I$. For this pair of hypotheses, $\Sigma_1$ denotes the covariance matrix of the
original data. The result then follows from Thm.~\ref{thm:opt}. \qed
\end{proof}

\subsection{Selecting attributes for a tile constraint}
\label{sec:attributeselection}
Once we have defined the most informative projection, which displays
the most prominent differences between the distributions parametrised
by the pair of hypotheses, we can view the data in this
projection. This allows the user to observe different patterns, e.g.,
a clustered set of points, a linear relationship or a set of outlier
points.

After observing a pattern, the user defines a tile $(R,C)$ to be added
to $\mathcal{T}_u$.  The set of data points $R$ involved in the
pattern can be easily selected from the projection shown.  For
selecting the attributes that characterise the pattern, we can use a
procedure where for each attribute the ratio between the
standard deviation of the attribute for the selection and the standard
deviation of all data points is computed. If this ratio is below a
threshold value $\tau$ (e.g., $\tau=0.5$), then the attribute is
included in the set of attributes $C$ characterising the pattern. The
intuition here is that we are looking for attributes in which the
selection of points are more similar to each other than is expected
based on the whole data.

% ------------------------------------------------------------
\section{Experiments}
\label{sec:experiments}
% ------------------------------------------------------------
In this section we first consider the stability and scalability of the
method presented in this paper. After this, we present two brief
examples of how the method is used to (i) explore relations in a
dataset and (ii) focus on investigating a hypothesis concerning
relations in a subset of the data.

\textbf{Dataset} In the experiments, we utilise the \textsc{german}
socio-economic dataset \cite{boley2013, kang2016}\footnote{
  \url{http://users.ugent.be/~bkang/software/sica/sica.zip}}. The
dataset contains records from 412 administrative districts in
Germany. The full dataset has 46 attributes describing
socio-economic, political and geographic aspects of the districts, but
we only use 32 variables (see the Sections \ref{sec:german1} and \ref{sec:german2} in the Appendix for details) in
the experiments. We scale the real-valued variables to zero mean
and unit variance. All of the experiments were performed with a
single-threaded R 3.5.0 \cite{Rproject} implementation on a MacBook
Pro laptop with a \unit[2.5]{GHz} Intel Core i7 processor.

% ------------------------------------------------------------
\subsection{Stability and scalability}
% ------------------------------------------------------------

We first study the sensitivity of the results with respect to noise or
missing data rows. We begin the experiment by separating 32
real-valued variables and 3 (non-trivial) factors from the full {\sc
  german} data. A synthetic dataset, parametrised by the noise term
$\sigma$ and an integer $\Delta n$ is constructed as follows. First,
we randomly remove $\Delta n$ rows, after which we to the remaining
variables add Gaussian noise with variance $\sigma^2$, and finally
rescale all variables to zero mean and unit variance. We create a
random tile by randomly picking a factor that defines the rows in a
tile and randomly pick 2--32 columns. The background distribution
consists of three such random tiles and the hypothesis tiles are
constructed of one such random tile $(R,C)$ as
$\mathcal{T}_{H_1}=\{(R,C)\}$ and $\mathcal{T}_{H_2}=\cup_{j\in
  C}{\{(R,\{j\})\}}$.

\begin{table}[t!]\centering
\centering
\begin{footnotesize}
\begin{tabular}[t]{ccc}
\toprule
  $\sigma$ & $\Delta n$ &  error \\
\midrule
$0$ & $0$ & $0.000$ \\
$0$ & $200$ & $0.038$ \\
$1$ & $0$ & $0.088$ \\
$1$ & $200$ & $0.135$ \\
$10$ & $0$ & $0.562$ \\
$10$ & $200$ & $0.664$ \\
\bottomrule
\end{tabular}
\end{footnotesize}
\caption{The mean error as function of perturbance to data (noise
  with variance $\sigma$ added and $\Delta n$ random rows removed).
  The error is here the difference in gain of Eq. \eqref{eq:g}
between the optimal solution $u_{\mathcal{H}}$ to the
  solution $u_{\mathcal{H}}^*$ found on perturbed data.}
\label{tab:r:a}
\end{table}

\begin{table}[t!]\centering
\centering
\begin{footnotesize}
\begin{tabular}[t]{cccc}
\toprule
  $n$ & $m$  & $t_\mathrm{model}$ (s) & $t_\mathrm{view}$ (s) \\
\midrule
$10^3$ & $10$ & $0.03$ & $0.01$ \\
$10^4$ & $10$ & $0.46$ & $0.08$ \\
$10^3$ & $100$ & $0.16$ & $0.66$ \\
$10^4$ & $100$ & $1.78$ & $5.52$ \\
$10^3$ & $200$ & $0.19$ & $2.58$ \\
$10^4$ & $200$ & $4.19$ & $22.35$ \\
\bottomrule
\end{tabular}
\end{footnotesize}
\caption{Median
  wall clock running time for random data with varying number of rows
  ($n$) and columns ($m$) for a dataset consisting of Gaussian random
  numbers. We give the time to add three  random tiles plus
  hypothesis tiles ($t_\mathrm{model}$) and the time to find the most
  informative view ($t_\mathrm{view}$), i.e., to solve Eq.~\eqref{eq:gain}.
}
\label{tab:r:b}
\end{table}

The results are shown in Tab.~\ref{tab:r:a}. We notice that the method
is relatively insensitive with respect to the gain to noise and
removal of rows. Even removing about half of the rows does not change
the results meaningfully. Only very large noise, corresponding to
$\sigma=10$ (i.e., c. 10\% signal to noise ratio) degrades the results
substantially.

\begin{figure*}[htbp]
\centering
\begin{tabular}{cc}
\includegraphics[width=0.4\textwidth]{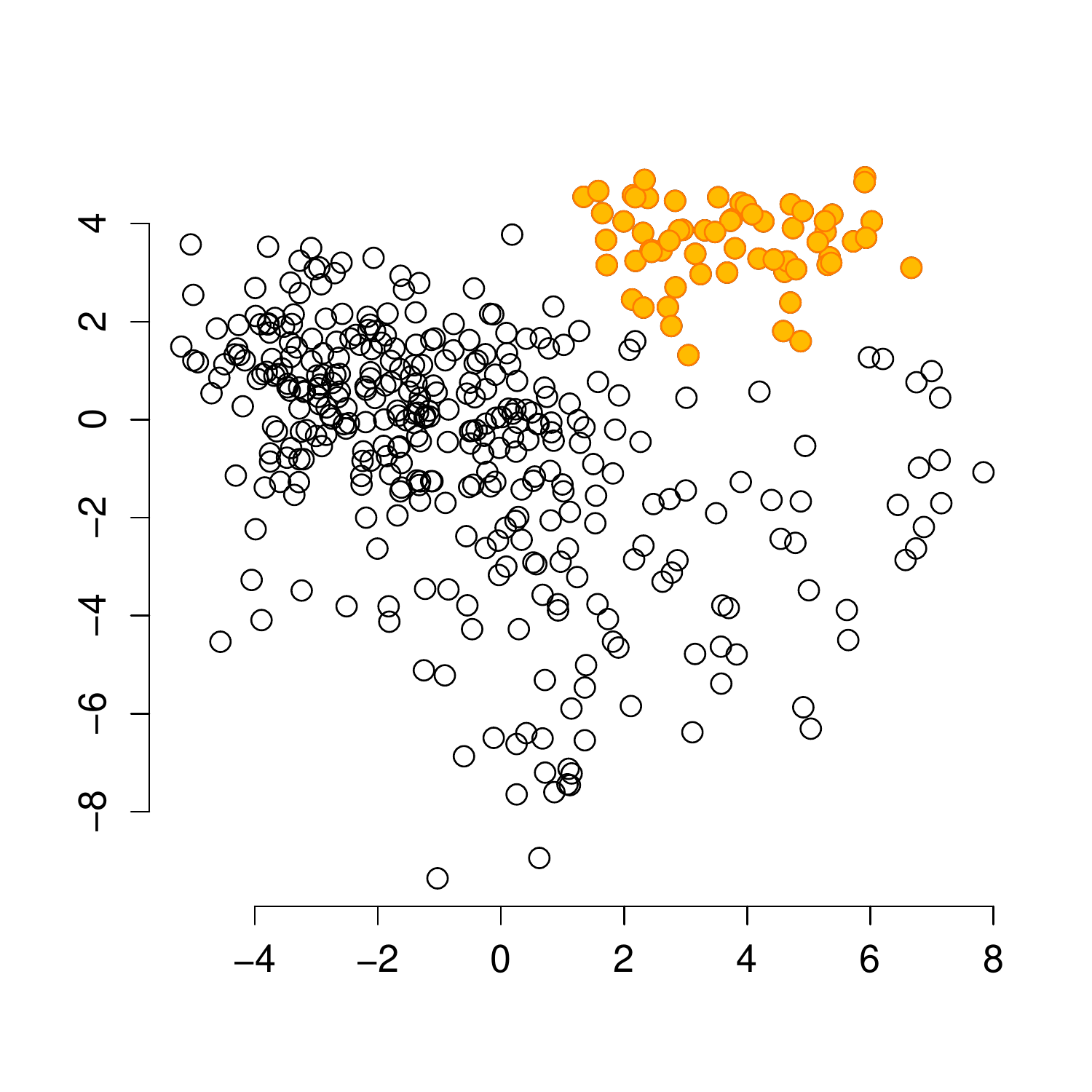} &
\includegraphics[width=0.4\textwidth]{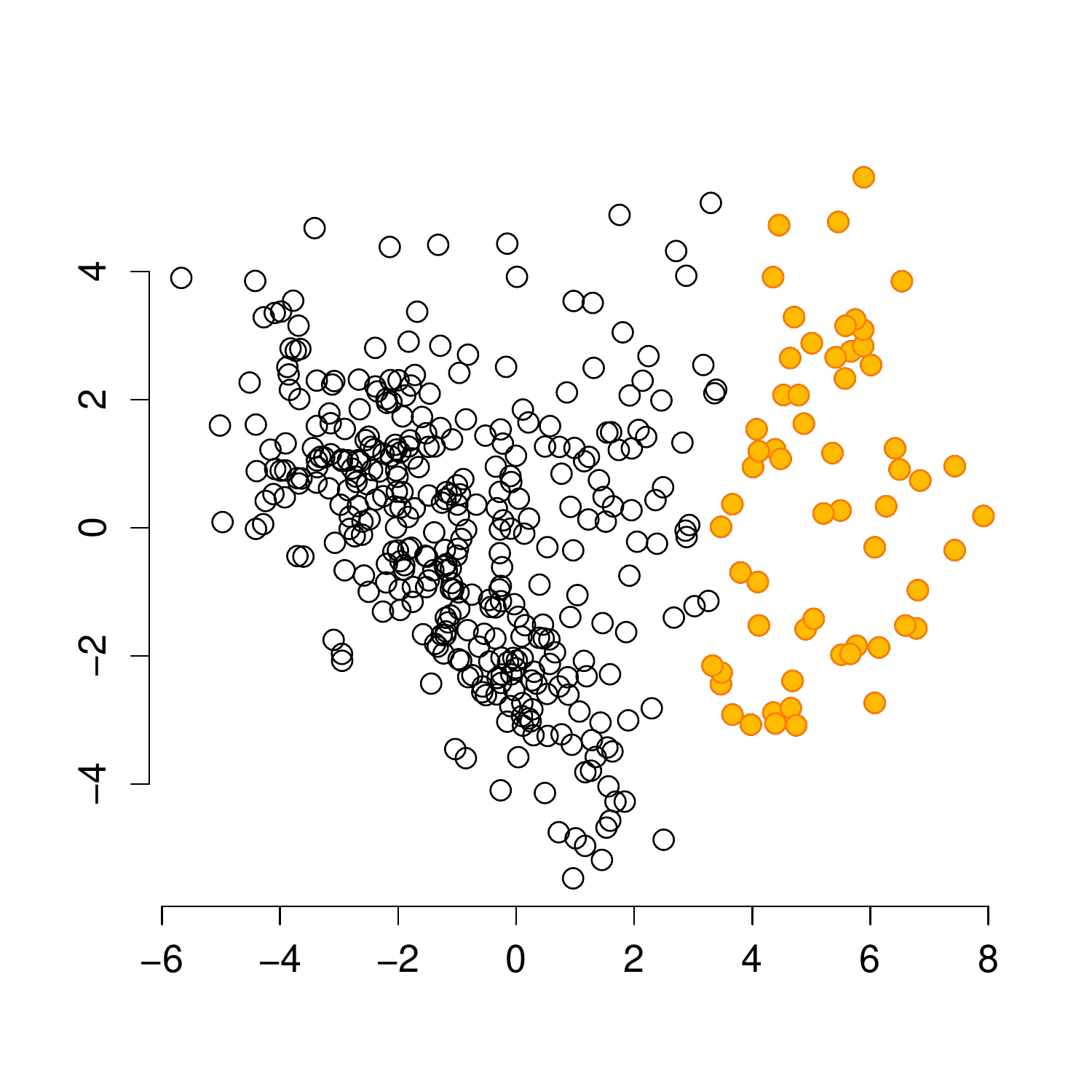}
\end{tabular}
  \caption{Views of the \textsc{german} dataset  corresponding to %hypotheses and user tiles represented using
the hypothesis pairs $\mathcal{H}_{E,\emptyset}$ (left) and $\mathcal{H}_{E,\{t\}}$ (right).
Black circles ($\circ$) show  data points; selected points are marked with orange.}
  \label{fig:exploration1}
 \end{figure*}

Tab. \ref{tab:r:b} shows the running time of the algorithm as a
function of the size of the data for Gaussian random data with a
similar tiling setup as used for the {\sc german} data. We make two
observations. First, the tile operations scale linearly with the size
of the data $nm$ and they are relatively fast. Most of the time is spent on
finding the views, i.e., solving Eq.~\eqref{eq:gain}. Even our
unoptimised pure R implementation runs in seconds for datasets that
are visualisable (having thousands of rows and hundreds of
attributes); any larger dataset should in any case be downsampled for
visualisation purposes.

% ------------------------------------------------------------
\subsection{Human-guided data exploration of the \textsc{german} dataset}
% ------------------------------------------------------------
Finally, we demonstrate our human-guided data exploration framework by
exploring the \textsc{german} dataset under different
hypotheses. Sections \ref{sec:german1} and \ref{sec:german2} in the
Appendix contain larger figures with more details (samples
corresponding to both hypotheses and axes labelled with components of
the projection vectors) and more thorough explanations of the
exploration process described below.

We start with {\em unguided data exploration}
where we have no prior knowledge about the
data and our interest is as generic as possible.
In this case $\mathcal{T}_u=\emptyset$ and as the hypothesis
tilings we use $\mathcal{T}_{E_1}$, where all rows and columns belong
to the same tile (a fully-constrained tiling), and
$\mathcal{T}_{E_2}$, where all columns form a tile of their own (fully
unconstrained tiling). Our hypothesis pair is then $\mathcal{H}_{E,
  \emptyset}=\langle\emptyset+ \mathcal{T}_{E_1},
\emptyset+\mathcal{T}_{E_2}\rangle$.

\begin{table}[t!]
\centering
\begin{tabular}{l cccc}
\toprule
& $\mathcal{H}_{E,\emptyset}$ & $\mathcal{H}_{E, \{t\}}$ &
$\mathcal{H}_{F, \emptyset}$ & $\mathcal{H}_{F, \{t\}}$ \\
\midrule
$u_{E, \emptyset}$& \textbf{8.831} & 3.887 & 1.921 &  1.124 \\
$u_{E, \{t\}}$        & 7.933 & \textbf{8.920} & 1.172 & 1.100 \\
$u_{F, \emptyset}$& 4.879 &  2.062 & \textbf{2.958} &  1.087 \\
$u_{F, \{t\}}$ & 1.618 & 1.842 & 1.489 &  \textbf{1.773} \\
$u_\mathrm{pca}$ & \textbf{8.831}& 3.887 & 1.921 &  1.124 \\
$u_\mathrm{ica}$  & 0.004 & 0.004  & 1.000 &  0.999 \\
 \bottomrule
\end{tabular}
\caption{The value of the gain  $G(u,\mathcal{H})$  for different projection vectors $u$ and hypothesis pairs $\mathcal{H}$.}
\label{tab:gains}
\end{table}

Now, looking at the view where the distributions characterised by the
pair $\mathcal{H}_{E, \emptyset}$ differ the most, shown in
Fig.~\ref{fig:exploration1} (left), we observe cluster patterns. Selecting a
set of data points allows us to investigate what kind of items and
attributes the selected points represent. E.g., Cluster 1 (shown in
orange in Fig.~\ref{fig:exploration1} (left)) corresponds to rural districts in
Eastern Germany characterised by a high degree of voting for the Left
party. We now add a tile constraint $t$ for the items in the observed
pattern where the columns (attributes) are chosen as described in
Sec.~\ref{sec:attributeselection}, using a threshold value $\tau =
2/3$.  The hypothesis pair is then updated to $\mathcal{H}_{E,
  \{t\}}=\langle \{t\}+ \mathcal{T}_{E_1}, \{t\}+\mathcal{T}_{E_2}
\rangle$. The most informative view displaying differences of the
distributions parametrised by $\mathcal{H}_{E, \{t\}}$ is now shown in
Fig.~\ref{fig:exploration1} (right)  and we observe that Cluster 2 (the selection
shown in orange) has become prominent. By inspecting the class
attributes of this selection we learn that these items correspond to
urban districts.

 \begin{figure*}[htbp]
 \centering
\begin{tabular}{cc}
\includegraphics[width=0.4\textwidth]{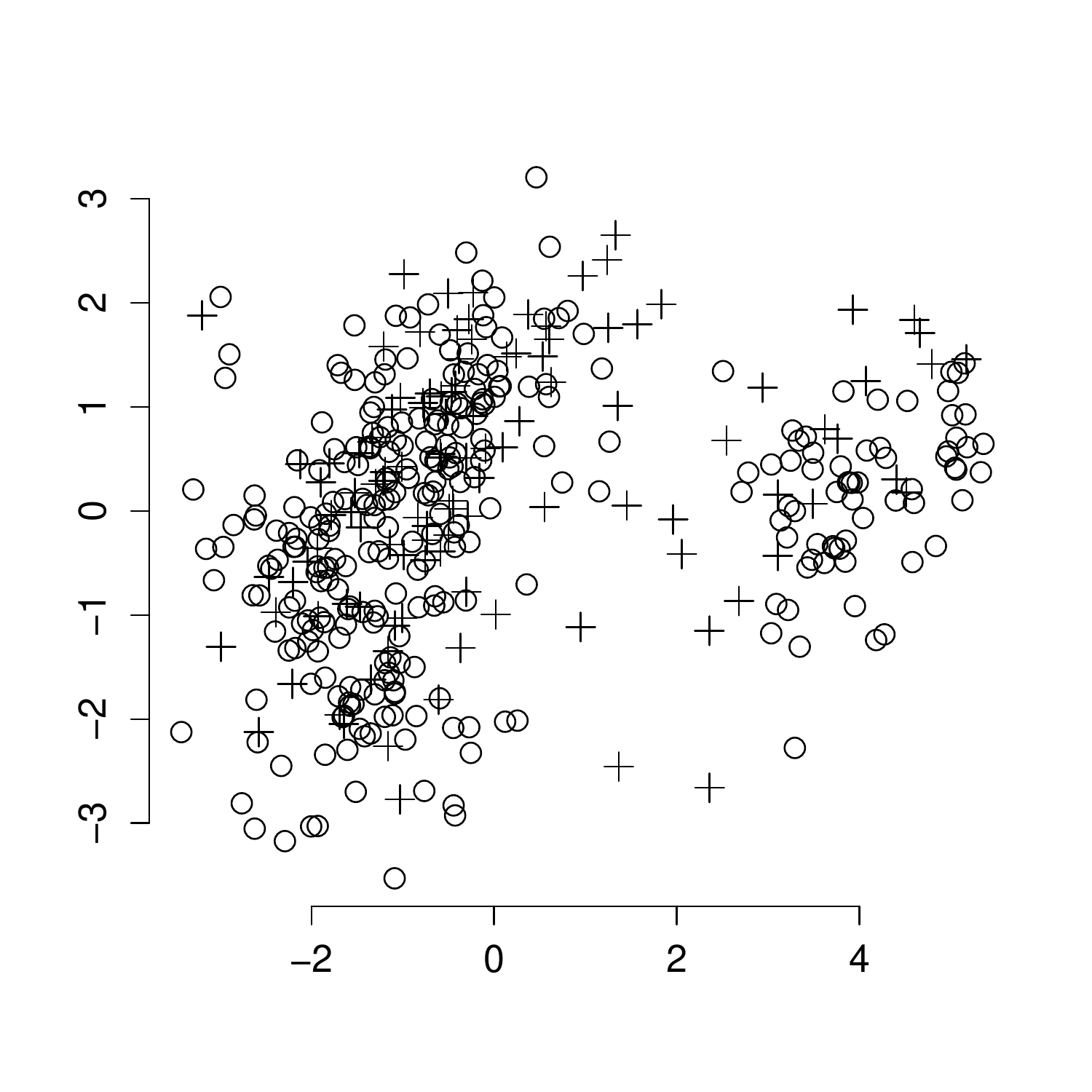} &
\includegraphics[width=0.4\textwidth]{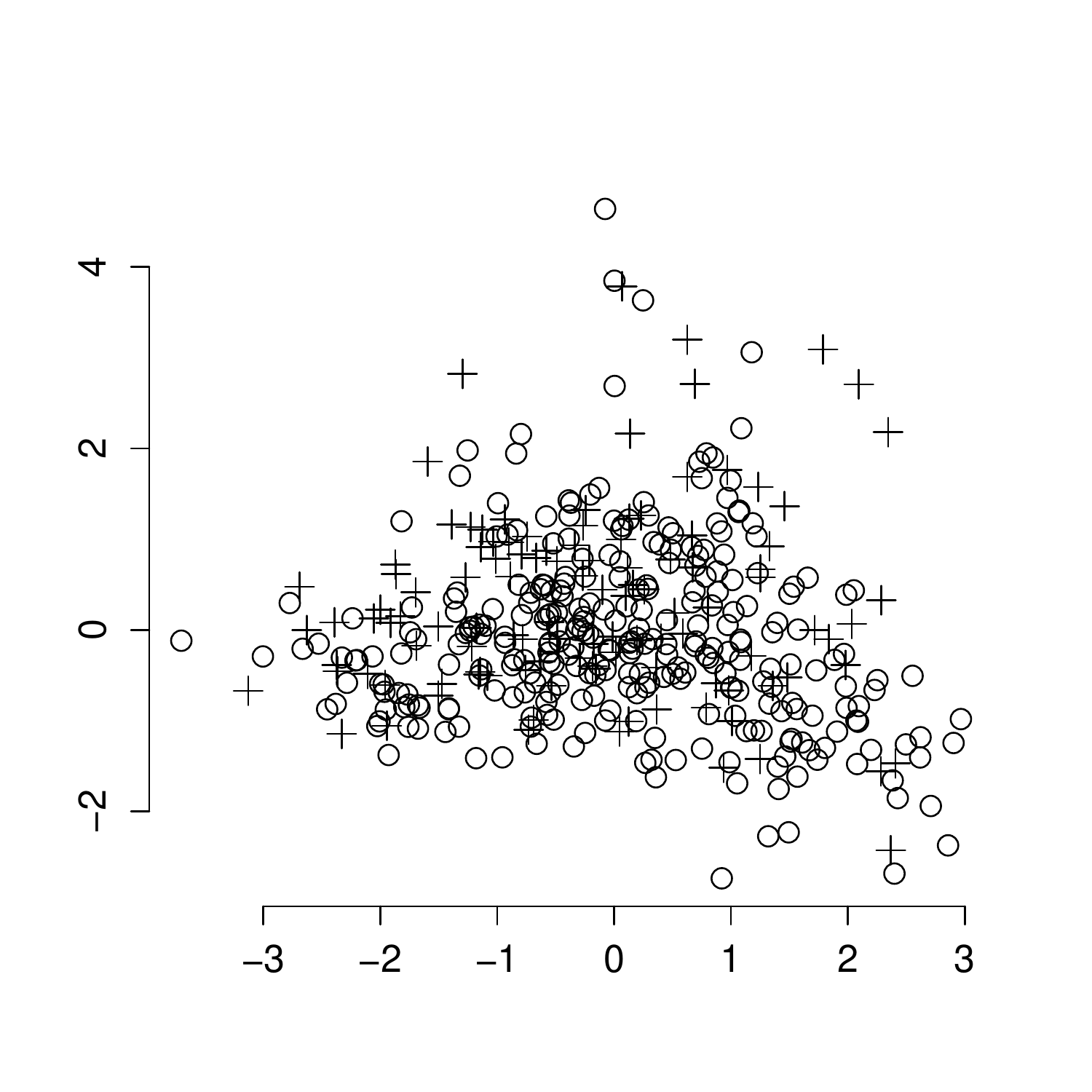}
\end{tabular}
  \caption{Views of the \textsc{german} dataset  corresponding to %hypotheses and user tiles represented using
the hypothesis pairs $\mathcal{H}_{F,\emptyset}$ (left) and $\mathcal{H}_{F,\{t\}}$ (right). The
circles~$\circ$ show data points inside the focus area; the points outside the focus
 area are shown using $+$.}
  \label{fig:exploration2}
 \end{figure*}

Next, we focus on a more specific hypothesis involving only a subset
of rows and attributes. As the subset of rows $R$ we choose rural
districts. As attributes of interest, we consider a partition $C =
C_1\cup C_2\cup C_3\cup C_4$, where $C_1$ contains voting results for
the political parties in 2009, $C_2$ contains demographic attributes,
$C_3$ contains workforce attributes, and $C_4$ contains education,
employment and income attributes. We here want to investigate
relations between different attribute groups, ignoring the relations
inside the groups.

We form the hypothesis pair $\mathcal{H}_{F, \emptyset}=\langle
\emptyset+\mathcal{T}_{F_1}, \emptyset+\mathcal{T}_{F_2} \rangle$,
where $\mathcal{T}_{F_1}$ consists of a tile spanning all rows in $R$
and all columns in $C$ whereas $\mathcal{T}_{F_2}$ consists of four
tiles: $t_i = (R, C_i)$, $i\in\{1,2,3,4\}$. Looking at the view where the
distributions parametrised by the pair $\mathcal{H}_{F, \emptyset}$
differ the most, shown in Fig.~\ref{fig:exploration2} (left), we find two clear
clusters corresponding to a division of the districts into those
located in the East, and those located elsewhere. We could also have
used our already observed background knowledge of Cluster 1, by
considering the hypothesis pair $\mathcal{H}_{F, \{t\}}=\langle
\{t\}+\mathcal{T}_{F_1}, \{t\}+\mathcal{T}_{F_2} \rangle$.  For this
hypothesis pair, the most informative view is shown in
Fig.~\ref{fig:exploration2} (right), which clearly is different to
Fig.~\ref{fig:exploration2} (left), demonstrating that the background knowledge
is important.

To understand the utility of the views shown, we compute values of the
gain function as follows. We consider our four hypothesis pairs
$\mathcal{H}_{E, \emptyset}$, $\mathcal{H}_{E, \{t\}}$,
$\mathcal{H}_{F, \emptyset}$, and $\mathcal{H}_{F, \{t\}}$. For each
of these pairs, we denote the direction in which the two distributions
differ most in terms of the variance (solutions to Eq.~\eqref{eq:gain})
by $u_{E, \emptyset}$, $u_{E, \{t\}}$, $u_{F, \emptyset}$, and $u_{F,
  \{t\}}$, respectively.  We then compute the gain $G(u,\mathcal{H})$
for each $u\in \{u_{E, \emptyset}, u_{E, \{t\}}, u_{F, \emptyset},
u_{F, \{ t\}} \}$ and $\mathcal{H}\in\{\mathcal{H}_{E, \emptyset},
\mathcal{H}_{E, \{t\}}, \mathcal{H}_{F, \emptyset}, \mathcal{H}_{F,
  \{t\}}\}$. For comparison, we also compute the first PCA and ICA
projection vectors, denoted by $u_\mathrm{pca}$ and $u_\mathrm{ica}$,
respectively, and calculate the gain in different hypothesis pairs
using these. The results are presented in Tab.~\ref{tab:gains}. We
notice that the gain is indeed always the highest when the projection
vector matches the hypothesis pair (highlighted in the table), as
expected. This shows that the views presented are indeed the most
informative ones given the current background knowledge and the
hypotheses being investigated. We also notice that the gain for PCA
is equal to that of unguided data exploration, as expected by
Thm.~\ref{thm:pca}.
When some
background knowledge is used or if we investigate a particular
hypothesis, the views achievable using PCA or ICA are less informative
than using our framework. The gains close to zero for ICA
are directions where the variance of the more constrained distribution
is small due to, e.g., linear dependencies in the data.

% ------------------------------------------------------------
\section{Related work}\label{sec:related}
% ------------------------------------------------------------
Iterative data mining \cite{hanhijarvi:2009} is a paradigm where
patterns already discovered by the user are taken into account as
constraints during subsequent exploration. This concept of iterative
pattern discovery is also central to the data mining framework
presented in \cite{debie:2011a, debie:2011b, debie2013}, where the
user's current knowledge (or beliefs) of the data is modelled as a
probability distribution over datasets, and this distribution is then
updated iteratively during the exploration phase as the user discovers
new patterns. Our work has been motivated by
\cite{puolamaki2010,puolamaki:2016, kang:2016b,puolamaki2017},
where these concepts have been successfully applied
in visual exploratory data analysis where
the user is visually shown a view of the data
which is maximally informative given the user's current
knowledge.
Visual interactive exploration has also been applied in
different contexts, e.g., in item-set mining and subgroup
discovery \cite{boley2013, dzyuba2013, vanleeuwen2015, paurat2014},
information retrieval \cite{ruotsalo2015}, and network
analysis~\cite{chau2011}.

Solving the problem of determining which views of the data that are
maximally informative to the user, and hence interesting, have been
approached in terms of, e.g., different projections and measures of
interestingness \cite{debie:2016:a, kang2016,
vartak:2015:a}. Constraints have also been used to assess the
significance of data mining results, e.g., in pattern
mining \cite{lijffijt2014} or in investigating spatio-temporal
relations \cite{chirigati:2016:a}.

The present work fundamentally differs from the above discussed
previous work concerning iterative data mining and applications to
visual exploratory data analysis in the following way. In previous
work, the user is presented with informative views (visual or not) of
the data, but \emph{the user cannot beforehand know which aspects of
  the data these views will show}, since by definition the views are
such that they contrast maximally with the user's current
knowledge. The implication is that the user cannot steer the
exploration process. In the present work we solve this navigational
problem by incorporating both the user's knowledge of the data, and
different hypotheses concerning the data into the background
distribution.

% ------------------------------------------------------------
\section{Conclusions}\label{sec:conclusions}
% ------------------------------------------------------------
In this paper we proposed a method to integrate both the user's
background model learned from the data and the user's current
interests in the explorative data analysis process. We provided an
efficient implementation of this method using constrained
randomisation. Furthermore, we extended PCA to work seamlessly with
the framework in the case of real-valued datasets.

The power of human-guided data exploration stems from the fact that
typical datasets contain a huge number of interesting
patterns. However, the patterns that are interesting to a user depend
on the task at hand. A non-interactive data mining method is therefore
restricted to either show generic features of the data---which may
already be obvious to an expert---or then output unusably many
patterns (a typical problem, e.g., in frequent pattern mining: there
are easily too many patterns for the user to absorb). Our framework is
a solution to this problem: by integrating the human background
knowledge and focus---formulated as a mathematically defined
hypothesis---we can at the same time guide the search towards topics
interesting to the user at any particular moment while taking the
user's prior knowledge into account in an understandable and efficient
way.

This work could be extended, e.g., to understand classifier or
regression functions in addition to static data and to different data
types, such as time series. An interesting problem would also be to
find an efficient algorithm that could find a sparse solution to the
optimisation problem of Eq.~\eqref{eq:gain}. To our knowledge, no such
solution is readily available as the solutions for sparse PCA are not
directly applicable here; sparse PCA would give a sparse variant of the
vector $v$ in Thm.~\ref{thm:opt}, which would however not result in a
sparse $u_{\mathcal{H}}=Wv$. An obvious next step would be to
implement these ideas in interactive data analysis tools.

\bibliographystyle{abbrv}
\bibliography{paper}
\begin{appendices}
  %%\appendix

% ----------------------------------------------------------------------
\section{Algorithm for merging tiles}
% ----------------------------------------------------------------------
\label{sec:merge}

Merging a new tile into a tiling where all tiles are non-overlapping
can be done efficiently using Alg.~\ref{alg:merge}. We assume that the
starting point always is a non-overlapping set of tiles and hence we
only need to consider the overlap that the new tile has with the tiles
already existing in the tiling. This is similar to the merging of
statements considered in \cite{kalofolias:2016}.

The algorithm for merging tiles has two steps. Let $\mathcal{T}$ be
the existing tiling (with non-overlapping tiles) and let $t=(R,C)$ be
the new tile to be added to the tiling, where $R$ is the set of rows
and $C$ is the set of columns spanned by the tile $t$. In the first
step (lines 1--11) we identify those tiles in the tiling $\mathcal{T}$
with which $t$ overlaps. In the second step we resolve (merge) the
overlap between $t$ and the tiles identified in $\mathcal{T}$. The
algorithm proceeds as follows.

An empty hash map is initialised (line 1) in order to be used to
detect overlap between columns of existing tiles in $\mathcal{T}$ and
the new tile $t$. We proceed to iterate over each row $R$ in the new
tile (lines 2--11).

The matrix $\mathcal{T}$ is of the same size as the data matrix and
contains information on which tiles that cover a particular part of
the data matrix. Each element in $\mathcal{T}$ hence holds information
concerning which tile that occupies a particular region. Since the
tiling described by $\mathcal{T}$ is non-overlapping, it means that
each element in $\mathcal{T}$ corresponds to the ID of the tile that
covers that position. Given a row $i \in R$ and a set of columns $C$
(line 3) we then get the IDs of the tiles on row $i$ with which $t$
overlaps and we store this in $K$. The hash map is used to detect if
this row has been seen before, i.e., if $K$ is a key in $S$ (line
4). If this is the first time this row is seen, $K$ is used as the key
for a new element in the hash map and $S(K)$ is initialised to be a
tuple (line 5). Elements in this tuple are referred to by name, e.g.,
$S(K)_\mathrm{rows}$ gives the set of rows associated with the key
$K$, while $S(K)_\mathrm{id}$ gives the set of tile IDs.

On lines 6 and 7 we store the current row index $S(K)_\mathrm{rows}$
and the unique tile IDs $S(K)_\mathrm{id}$ in the tuple. If the row
was seen before, the row set associated with these tile IDs is updated
(line 9). After this first step, the hash map $S$ contains tuples of
the form (\emph{rows}, \emph{id}) where \emph{id} specifies the IDs of
the tiles with which $t$ overlaps at the rows specified by
\emph{rows}.

We then proceed to the second step in the algorithm (lines 12--16)
where the identified overlap is resolved. We first determine the
currently largest tile ID in use (line 12). After this we iterate over
the tuples in the hash map $S$. For each tuple we must update the
tiles having IDs $S(K)_\mathrm{id}$ and on line 14 we hence find the
columns associated with these tiles. After this, the IDs of the
affected overlapping tiles are updated (line 15), and the tile ID
counter is incremented. Finally, the updated tiling is returned on
line 17. The time complexity of the tile merging algorithm is
$\mathcal{O}{(n m)}$.

\begin{algorithm2e}[h]
 \caption{\label{alg:merge} Merging a new tile $t$ with tiles in a
   tiling $\mathcal{T}$. The function \texttt{HashMap} denotes a hash
   map. The value in a hash map $H$ associated with a key $x$ is
   $H(x)$ and $\texttt{keys}(H)$ gives the keys of $H$. The function
   \texttt{Tuple} creates a (named) tuple. An element $a$ in a tuple
   $w = (\mathrm{a}, \mathrm{b})$ is accessed as $w_\mathrm{a}$. The
   function \texttt{unique} returns the unique elements of an array.}
 \begin{small}
 \SetKwInOut{Input}{input}
 \SetKwInOut{Output}{output}

 \Input{(1) A tiling $\mathcal{T}$ ($n \times m$
   data matrix where each element is the ID of the tile
   it belongs to) \\(2) A new tile $t = (R,
   C)$.}
 \Output{$\mathcal{T} + \{t\}$ (the tiling $\mathcal{T}$ merged with $t$).}

 $S \leftarrow \texttt{HashMap}$\;

 \For{$i \in R$} {
   $K \leftarrow \mathcal{T}(i, C)$\;
   \uIf { $K \notin \texttt{keys}(S) $}{
     $S(K) \leftarrow \texttt{Tuple}$\;
     $S(K)_\mathrm{rows} \leftarrow \{i\}$\;
     $S(K)_\textrm{id} \leftarrow \texttt{unique}( \mathcal{T}(i, C))$\;
   }      \Else{
     $S(K)_\textrm{rows} \leftarrow  S(K)_\textrm{rows} \cup \{i\}$\;
   }
 }

 $p_\mathrm{max} \leftarrow \max(\mathcal{T}(R,C))$\;
 \For{$K \in \texttt{keys}(S)$}{
   $C' = \left\{ c \mid \mathcal{T}(S(K)_\mathrm{rows}, c) \in S(K)_\textrm{id}  \right\}$ \;
   $\mathcal{T}\left(S(K)_\textrm{rows}, C'\right) \leftarrow p_\mathrm{max} + 1$; \qquad  $p_\mathrm{max} \leftarrow p_\mathrm{max} + 1$\;
     }
\BlankLine
 \Return{$\mathcal{T}$}
 \BlankLine
 \end{small}
\end{algorithm2e}

\begin{figure}[h]
  \centering
  \centering
  \includegraphics[width=0.8\columnwidth]{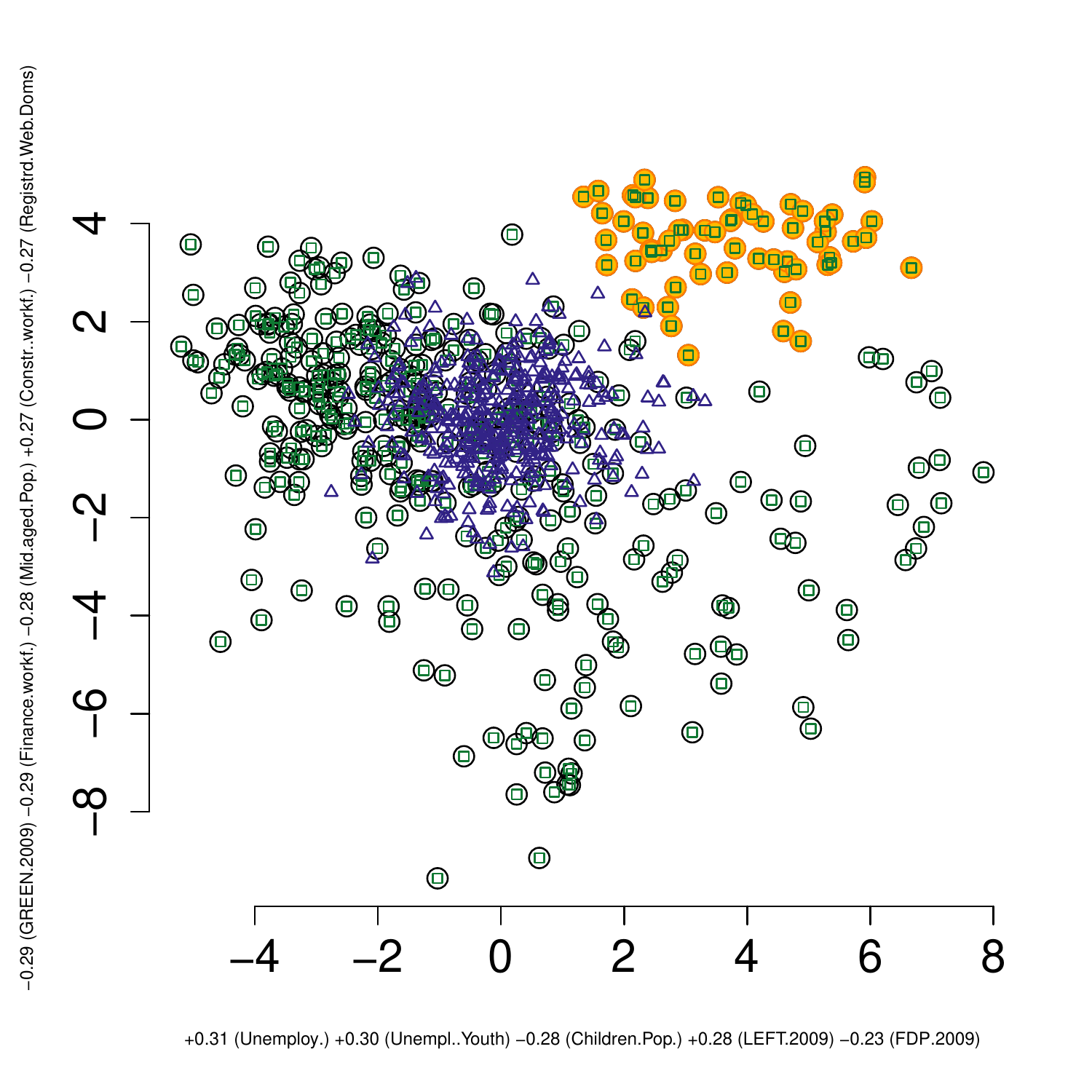}
  \caption{Step 1 in the exploration of the dataset. The data points are shown by
    black spheres. Samples from the first part of the hypothesis pair are shown with
    green squares and samples from the second part with blue
    triangles. The selected cluster of points (see the text for
    discussion) is shown with orange colour. The axis labels show
    the five coefficients with the largest absolute values of each
    projection vector.}
  \label{fig:explore:vee}
\end{figure}
% ----------------------------------------------------------------------
\section{Exploration of the German data without background knowledge}
% ----------------------------------------------------------------------
\label{sec:german1}

\subsection*{Dataset}
In the experiments we consider the \textsc{german} socio-economic
dataset \cite{boley2013, kang2016}.
  The
dataset contains records from 412 administrative districts in
Germany. Each district is represented by 46 attributes describing
socio-economic and political aspects in addition to attributes such as
the type of the district (rural/urban), area name/code, state, region
and the geographic coordinates of each district center. The
socio-ecologic attributes include, e.g., population density, age and
education structure, economic indicators (e.g., GDP growth,
unemployment, income), and the proportion of the workforce in
different sectors. The political attributes include election results
of the five major political parties (CDU/CSU, SPD, FDP, Green, and
Left) in the German federal elections in 2005 and 2009, as well as the
voter turnout. For our experiments we remove the election results from
2005, all non-numeric variables, and the area code and coordinates of
the districts, resulting in 32 real-valued attributes (although we use
the full dataset when interpreting the results). We scale the
real-valued variables to zero mean and unit variance.

\subsection*{Step 1}
We first consider the case where the data miner has no prior knowledge
concerning the relations in the data, i.e., we have no initial
background knowledge. We hence set $\mathcal{T}_u = \emptyset$ and as
the hypothesis tilings we use $\mathcal{T}_{E_1}$, where all rows and
columns belong to the same tile (a fully-constrained tiling), and
$\mathcal{T}_{E_2}$, where all columns form their own tile (fully
unconstrained tiling). Our hypothesis pair is then $\mathcal{H}_{E,
  \emptyset}=\langle \emptyset+\mathcal{T}_{E_1},
\emptyset+\mathcal{T}_{E_2}\rangle$.

We then consider the first view of the data
(Figure~\ref{fig:explore:vee}), which is maximally informative in terms
of the two distributions parametrised by the hypotheses differing the
most. In our figures we use black circles ($\circ$) to denote the real
data points. We mark selected clusters with orange. Green squares
($\square$)  denote points belonging to the data sample
from the distribution parametrised by
$\mathcal{T}_u+\mathcal{T}_{E_1}$ and blue triangles ($\triangle$)
denote points belonging to the data sample from the distribution
parametrised by $\mathcal{T}_u+\mathcal{T}_{E_2}$. Note that in this
particular case the actual data and the sample corresponding to
$\mathcal{T}_u+\mathcal{T}_{E_1}$ are identical (although the rows
might be in different order), since this is a fully constrained
tiling. The $x$ and $y$ axis labels show the five attributes with the
largest absolute values of each projection vector.

\begin{figure*}[h]
  \centering
  \begin{subfigure}[b]{0.4\textwidth}
  \includegraphics[width=0.8\columnwidth]{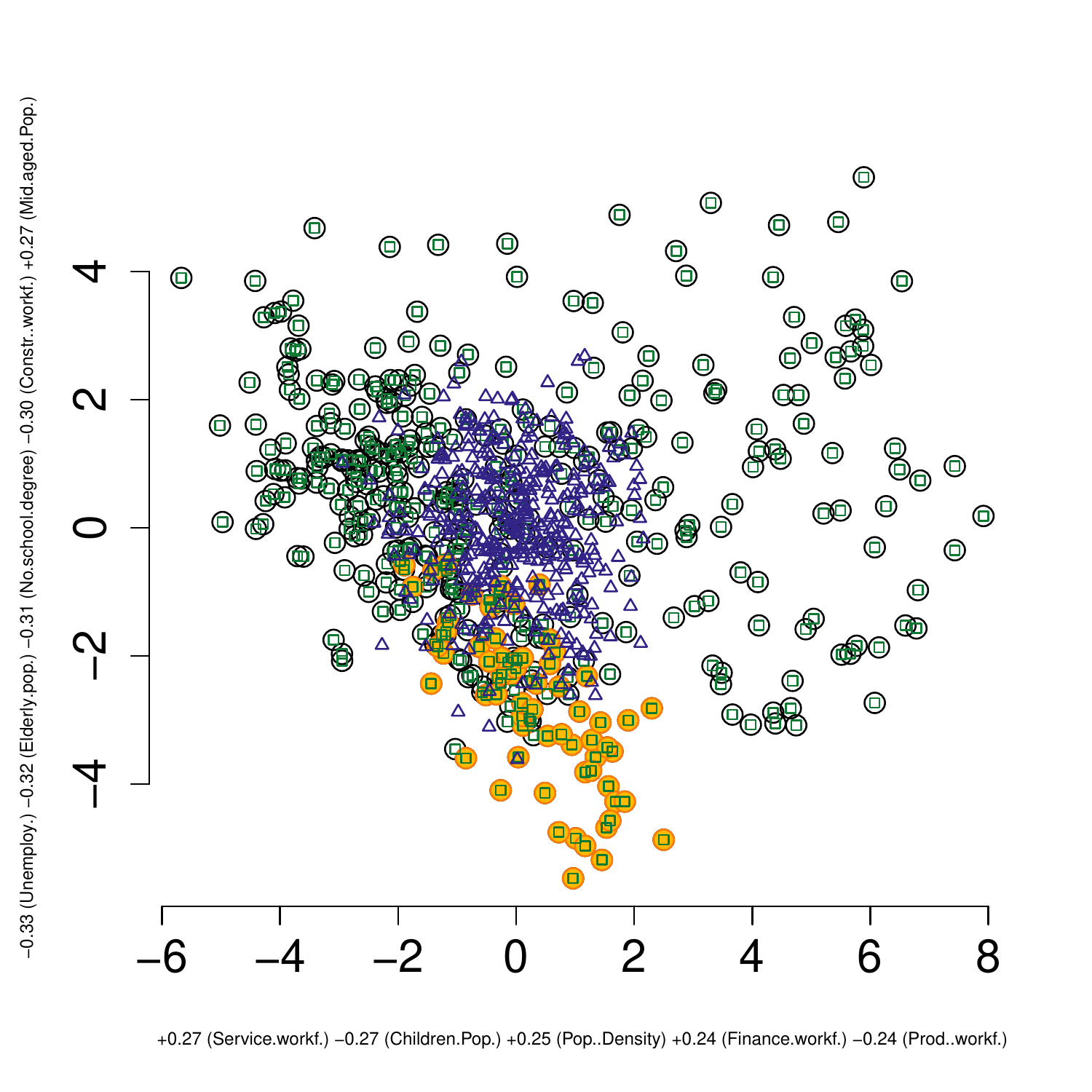}
  \caption{Hypotheses updated with the knowledge concerning Cluster 1
    (marked here for illustration purposes).}
 \label{fig:explore:veeb}
  \end{subfigure}
 \begin{subfigure}[b]{0.4\textwidth}
  \includegraphics[width=0.8\columnwidth]{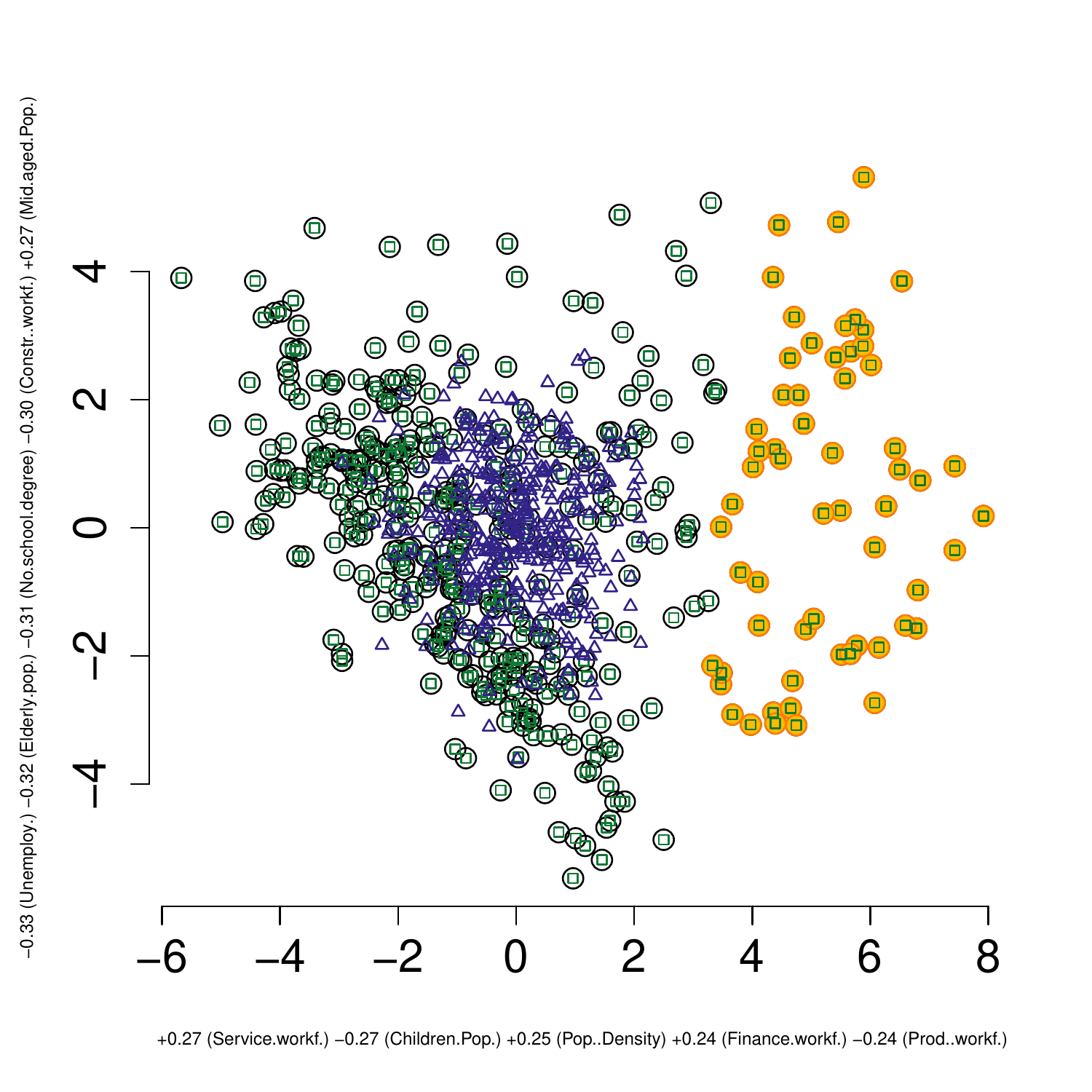}
  \caption{Cluster 2 marked.}
  \label{fig:explore:vet}
  \end{subfigure}
  \caption{Step 2 in the exploration of the dataset. The meaning of
    the points is as in Fig.~\ref{fig:explore:vee}. Left: Hypotheses
    updated with the knowledge of Cluster 1 (marked here for
    illustration purposes). Right: Cluster 2 marked.}
  \label{fig:explore:2}
\end{figure*}

We observe that there are cluster patterns visible in the data and
that the two distributions differ (the green and blue points are
distributed differently). In order to investigate the characteristics
of the data points, corresponding to different patterns in the
\textsc{german} data, we select a set of points that form a
cluster. Here we first choose to focus on the set of points in the
upper right corner, marked with orange in
Figure~\ref{fig:explore:vee}.

We now want to learn about the cluster we have identified.  We
consider the \emph{Type} and \emph{Region} attributes for the subset
of data points (the marked cluster in the view) in the original
data. These two categorical attributes tell whether a district in a
cluster is urban or rural (\emph{Type}) and where in Germany it is
located (\emph{Region}); in the Northern, Southern, Western, or
Eastern region. For Cluster 1, we obtain the information shown in
Table~\ref{tab:explore:c1}.

\begin{table}[h]
\centering
\begin{footnotesize}
\begin{subtable}{0.21\textwidth}
\begin{tabular}{ll}
  \toprule
  Region &    Type \\
  \midrule
East :62   & Rural:62   \\
  North: 0   & Urban: 0   \\
  South: 0   &  \\
  West : 0   &  \\
   \bottomrule
\end{tabular}
\caption{Cluster 1.}
  \label{tab:explore:c1}
\end{subtable}
\begin{subtable}{0.21\textwidth}
\begin{tabular}{ll}
  \toprule
  Region &    Type \\
  \midrule
East :22   & Rural: 0   \\
  North:10   & Urban:60   \\
  South: 7   &  \\
  West :21   &  \\
   \bottomrule
\end{tabular}
\caption{Cluster 2.}
  \label{tab:explore:c2}
\end{subtable}
\end{footnotesize}
\caption{{\em Region} and {\em Type} attributes for clusters observed in the data.}
  \label{tab:explore:clusters}
\end{table}

We also consider a parallel coordinates plot of the data, shown in
Figure~\ref{fig:explore:pcp:c1}. This plot shows all 32 attributes in the
data. The currently selected points (Cluster 1) are shown with red
while the rest of the data is shown in black. The number in 
parentheses following each variable name is the ratio of the standard
deviation of the selection and the standard deviation of all data. If
this number is small we can conclude that the values for a particular
attribute are homogeneous inside the selection (behave similarly).

Cluster 1 hence corresponds to Urban areas in the East. Based on the
parallel coordinates plot in Figure~\ref{fig:explore:pcp:c1}, one
clear political feature which we can observe is that there is little
support for the Green party and a high support for the Left party in
these districts.

\subsection*{Step 2}
We continue our exploration by adding a tile constraint for Cluster
1. The set of rows for the tile is determined by our selection (marked
in Figure~\ref{fig:explore:vee}). To determine the set of columns for
the tile constraint we use the following heuristic: using the parallel
coordinates plot (Figure~\ref{fig:explore:pcp:c1}) we choose as
columns for the tile those attributes for which the standard deviation
ratio (the number in parentheses) $\tau$ is less than $2/3$.

We then update our hypotheses to take into account the newly added tile, i.e.,
consider the  hypothesis pair $\mathcal{H}_{E,
  \{t\}}=\langle \{t\}+ \mathcal{T}_{E_1}, \{t\}+\mathcal{T}_{E_2}
\rangle$.
The most informative view is shown in Figure~\ref{fig:explore:veeb},
We have here for illustration purposes marked Cluster 1 with
orange. As can be seen, this cluster is no longer as clearly visible
as in our first view in Figure~\ref{fig:explore:vee}. This is
expected, since this pattern has been accounted for in the background
distribution and the relations in this cluster no longer differ
between the data samples corresponding to the distributions
characterised by the hypothesis pair we are currently
investigating. Instead we observe Cluster 2, marked in
Figure~\ref{fig:explore:vet}. In a similar fashion as we did for
Cluster 1, we consider the \emph{Region} and \emph{Type} attributes
for Cluster 2 (Table~\ref{tab:explore:c2}) and conclude that we have
found urban districts spread out over all regions. Based on the
parallel coordinates plot shown in Figure~\ref{fig:explore:pcp:c2} we
can conclude that these districts are characterised by a low fraction
of agricultural workforce and a high amount of service workforce, both
as expected in urban districts. We also notice that these districts
have had a high GDP growth in 2009 and that it appears that the amount
of votes for the CDU party in these districts was quite low.

\begin{figure*}[!h]
  \centering
  \begin{subfigure}[b]{0.45\textwidth}
  \includegraphics[width=0.8\columnwidth]{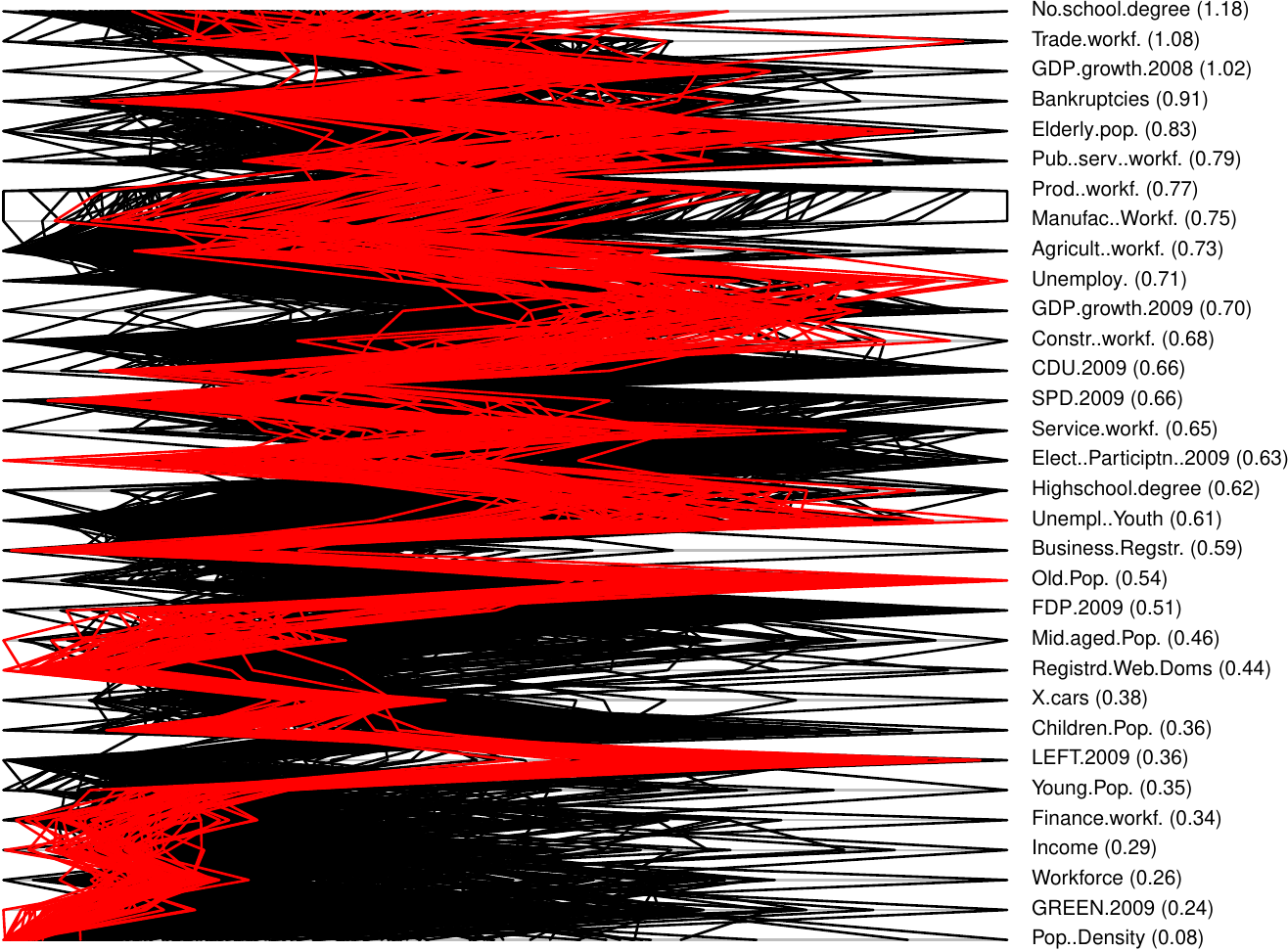}
  \caption{Parallel coordinates plot with Cluster 1 highlighted.}
  \label{fig:explore:pcp:c1}
  \end{subfigure}
  \begin{subfigure}[b]{0.45\textwidth}
    \includegraphics[width=0.8\columnwidth]{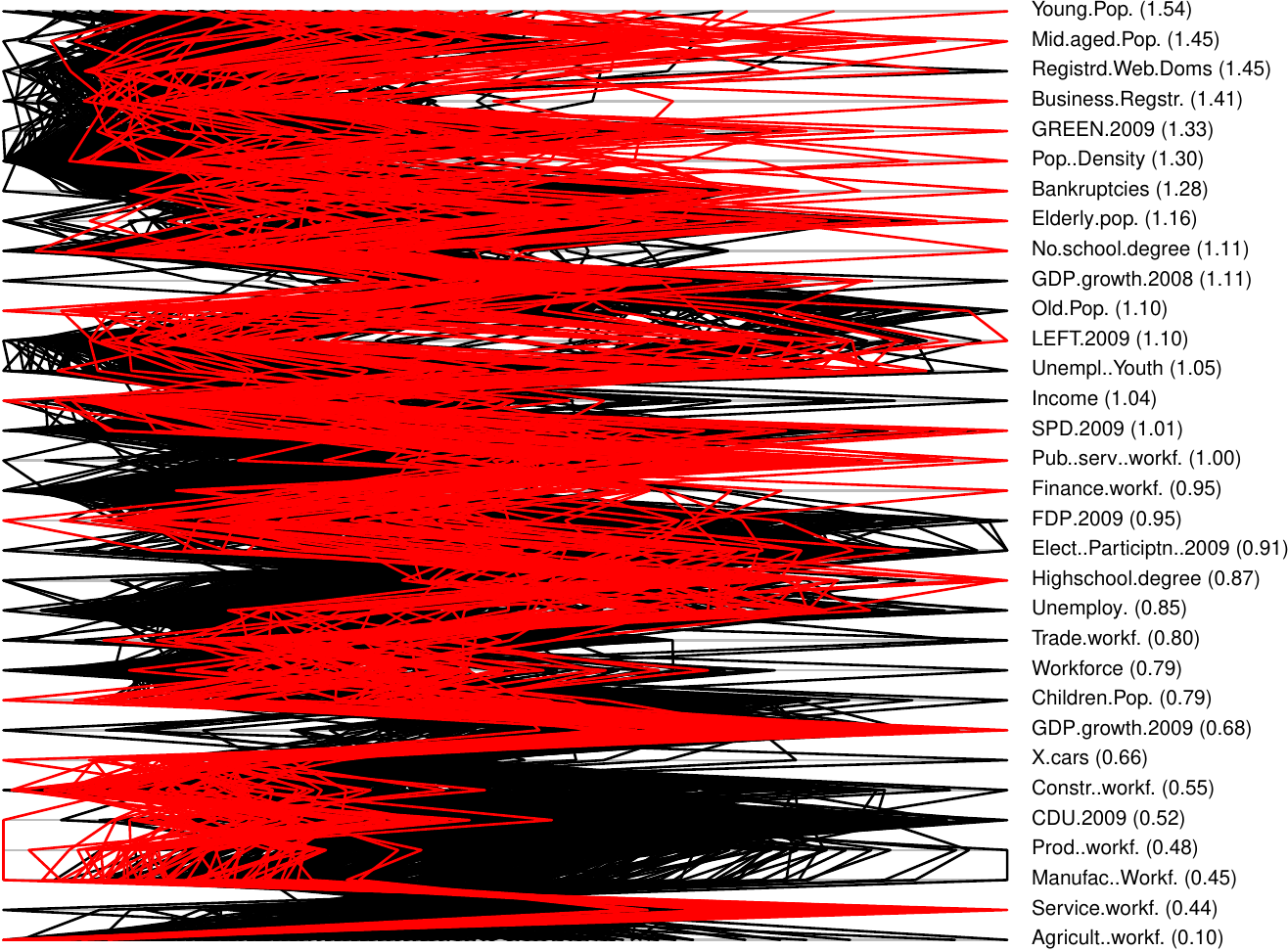}
    \caption{Parallel coordinates plot with Cluster 2 highlighted.}
  \label{fig:explore:pcp:c2}
  \end{subfigure}
  \caption{Parallel coordinates plots of the two clusters. Left: Cluster 1 highlighted; right: Cluster 2 highlighted.}
  \label{fig:explore:pcp}
 \end{figure*}

\begin{table}[!h]
\centering
\begin{footnotesize}
  \begin{tabular}{l l}
    \toprule
    \textbf{Group} & \textbf{Attributes} \\
\midrule
$C_1$  &  LEFT.2009, CDU.2009, SPD.2009,
\\ & FDP.2009, GREEN.2009 \\
&\\
$C_2$  & Elderly.pop., Old.Pop., Mid.aged.Pop., \\
& Young.Pop., Children.Pop.\\
&\\
$C_3$  & Agricult..workf., Prod..workf., \\
& Manufac..Workf., Constr..workf., \\
& Service.workf., Trade.workf., \\
& Finance.workf., Pub..serv..workf. \\
& \\
$C_4$  & Highschool.degree, No.school.degree, \\
& Unemploy., Unempl..Youth, Income\\
\bottomrule
  \end{tabular}
  \end{footnotesize}
  \caption{Column groups in the focus tile.}
  \label{tab:focus:cgs}
\end{table}

% ----------------------------------------------------------------------
\section{Exploration of the \textsc{german} data with specific hypotheses}
% ----------------------------------------------------------------------
\label{sec:german2}

\subsection*{Case 1: No background knowledge}
In this section we consider investigating hypotheses involving a
subset of data items (rows in the data matrix, corresponding to
different districts) and attributes.

We want to investigate a hypothesis concerning the relations between
certain attribute groups in {\em rural areas}. We hence define our
hypotheses as follows. As the subset of rows $R$ we choose all
districts that are of the rural type. We then partition a subset of
the attributes into four groups. The first attribute group ($C_1$)
consists of the voting results for the political parties in 2009. The
second attribute group ($C_2$) describes demographic properties such
as the fraction of elderly people, old people, middle aged people,
young people, and children in the population. The third group ($C_3$)
contains attributes describing the workforce in terms of the fraction
of the different professions such as agriculture, production, or
service. The fourth group ($C_4$) contains attributes describing the
education level, unemployment, and income. The attribute groupings are
listed in Table~\ref{tab:focus:cgs}.

% ---------------------------------
\begin{figure*}[h]
\centering
  \vspace*{-4ex}
  \begin{subfigure}[b]{0.4\textwidth}
  \includegraphics[width=\textwidth]{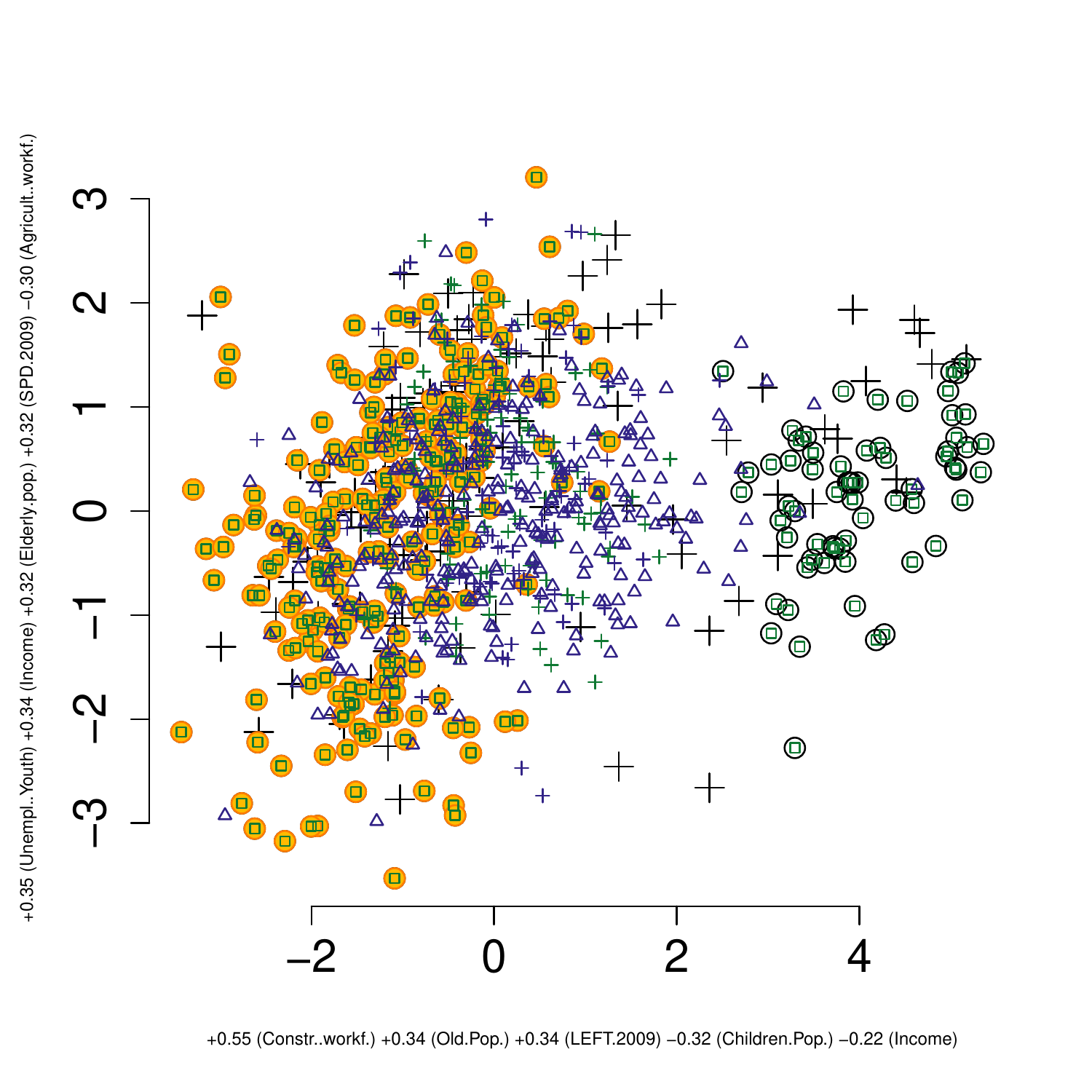}
  \caption{Cluster 3 marked}
  \label{fig:focus:c1}
  \end{subfigure}
  \begin{subfigure}[b]{0.4\textwidth}
  \includegraphics[width=\textwidth]{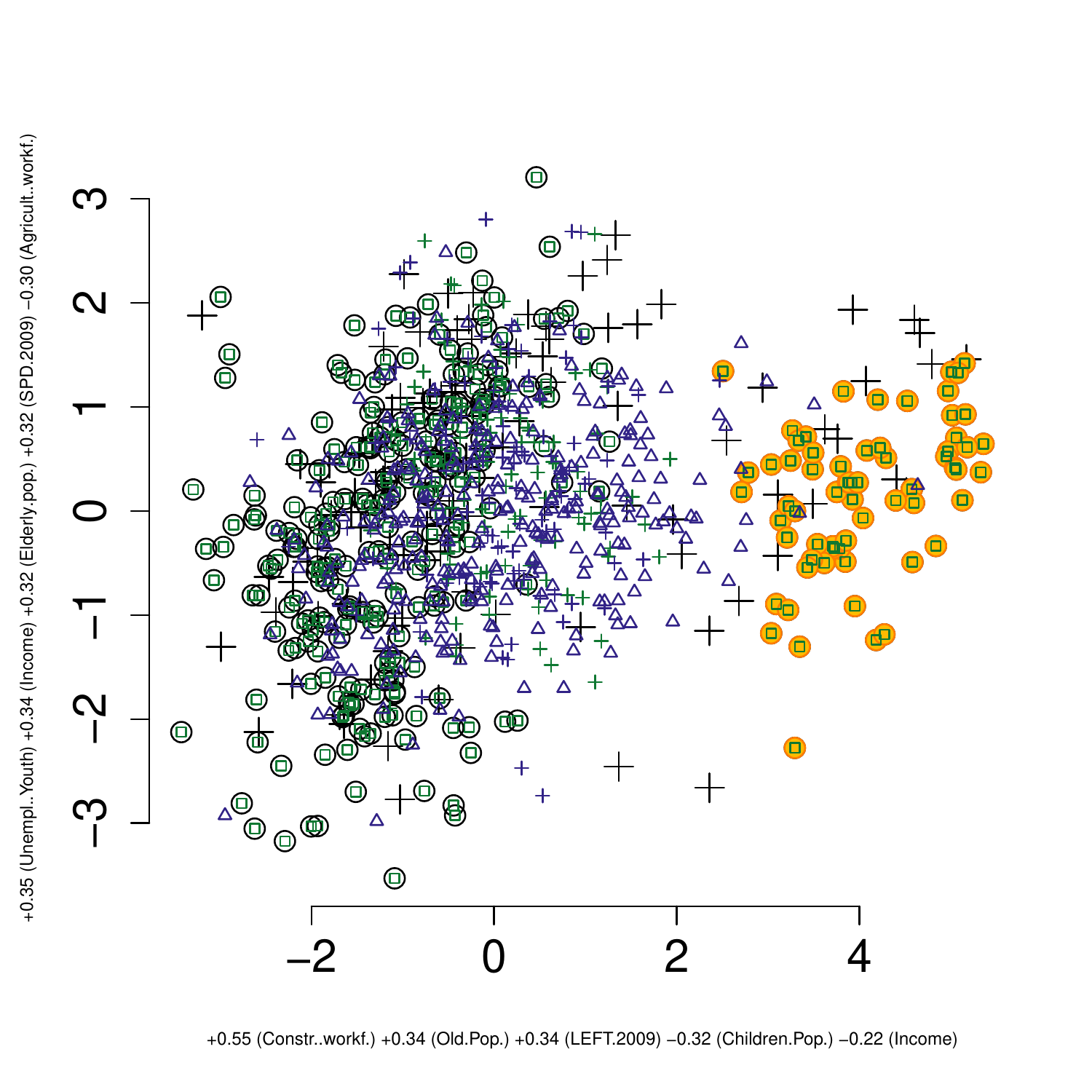}
  \caption{Cluster 4 marked}
  \label{fig:focus:c2}
  \end{subfigure}
    \begin{subfigure}[b]{0.4\textwidth}
    \end{subfigure}

    \begin{subfigure}[b]{0.45\textwidth}
  \includegraphics[width=0.8\columnwidth]{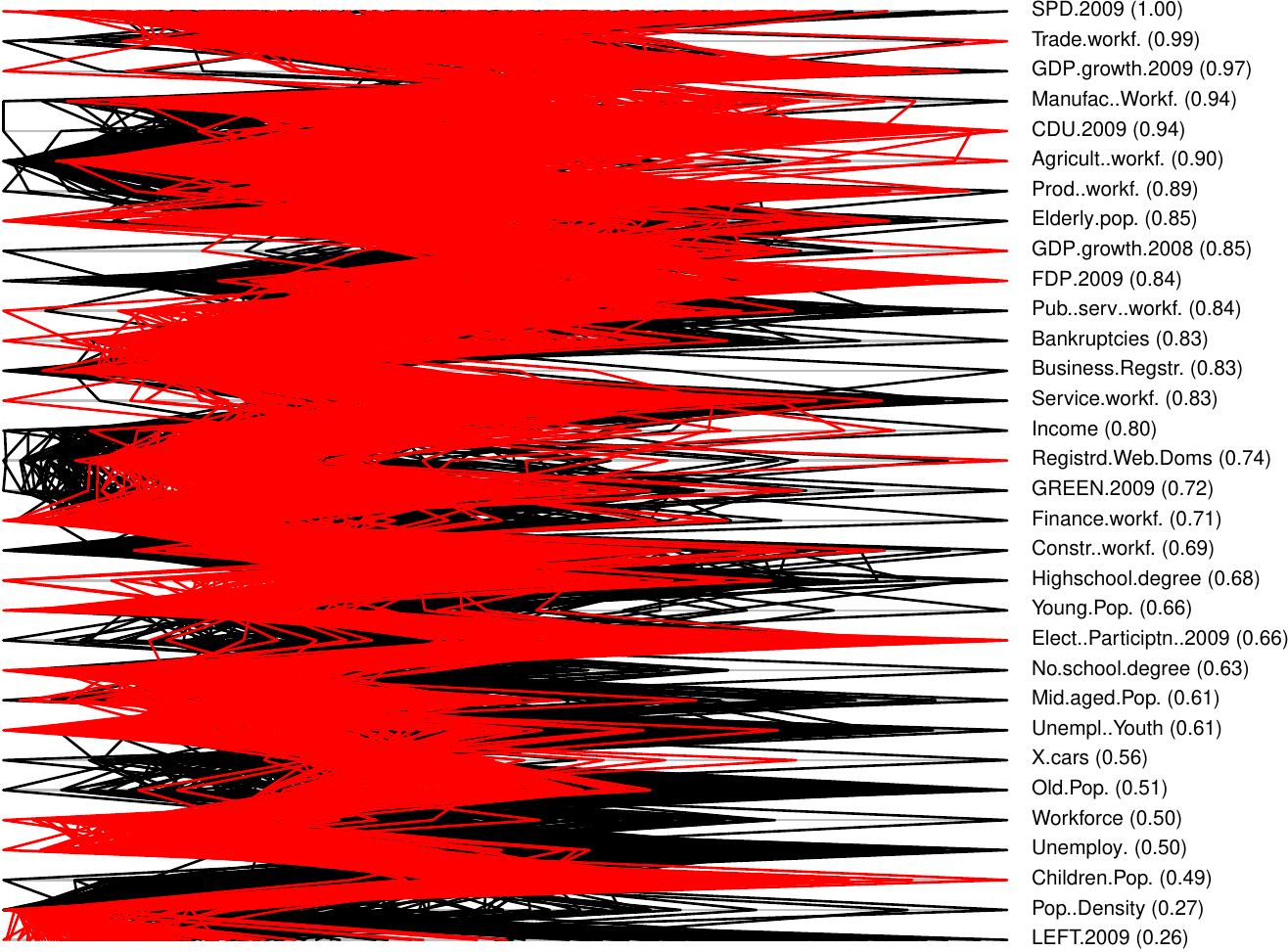}
  \caption{Parallel coordinates plot with Cluster 3 highlighted.}
  \label{fig:focus:pcp:fc1}
  \end{subfigure}
  \begin{subfigure}[b]{0.45\textwidth}
    \includegraphics[width=0.8\columnwidth]{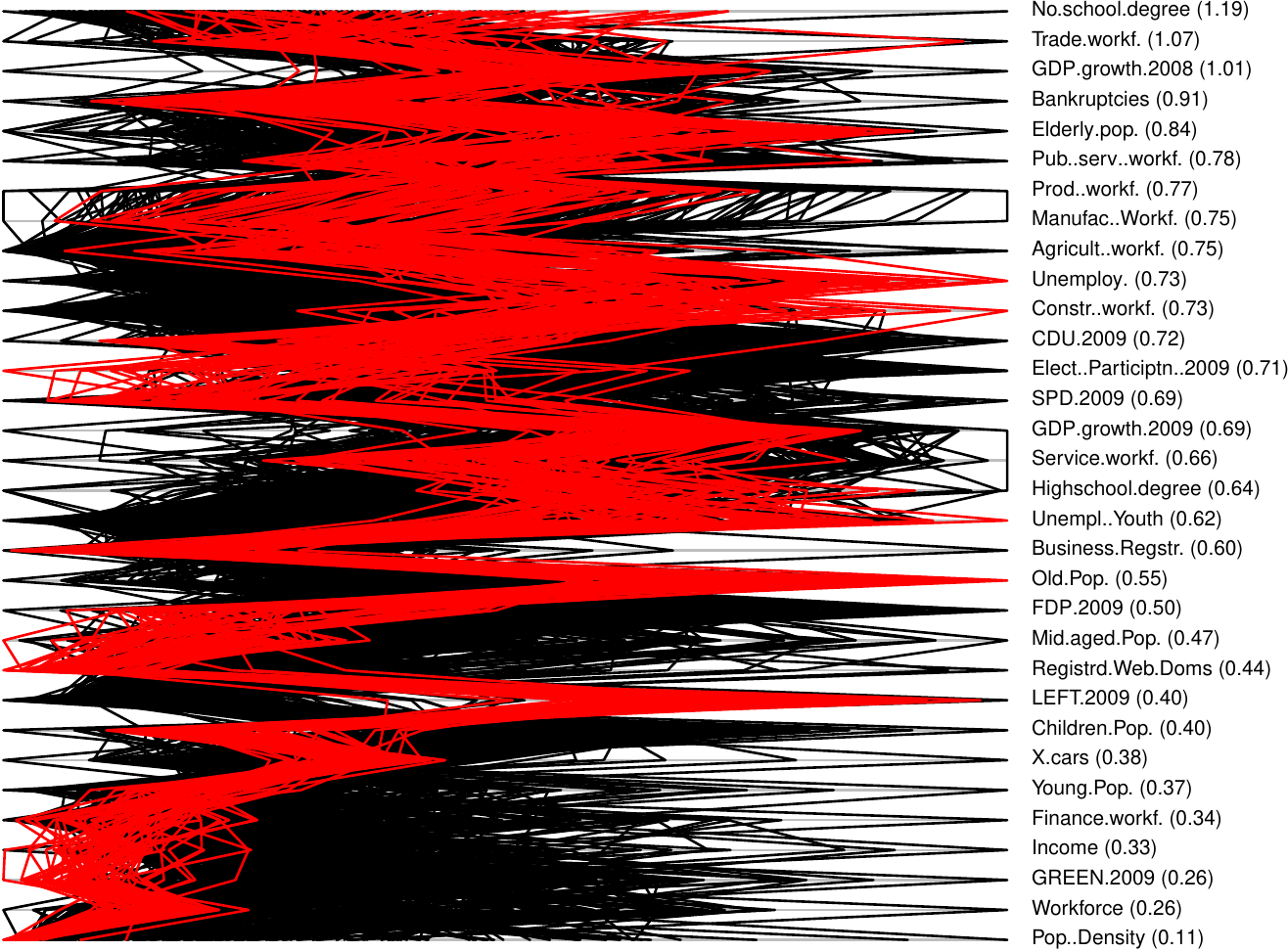}
    \caption{Parallel coordinates plot with Cluster 4 highlighted.}
  \label{fig:focus:pcp:fc2}
  \end{subfigure}
 \caption{ \small{ Initial view of the data. \textbf{Top row:} symbols as in
   Fig. \ref{fig:explore:vee}, except that points outside the focus area
   are shown using $+$ where black means data, and green and blue
   means samples from the first and second part of the hypothesis
   pair, respectively. Two visible clusters (3 and 4) are
   marked. \textbf{Bottom row:} parallel coordinates plots of the two
   clusters.  } }
\end{figure*}

%----------------------------------

We then form a hypothesis pair $\mathcal{H}_{F, \emptyset}=\langle
\emptyset+\mathcal{T}_{F_1}, \emptyset+\mathcal{T}_{F_2} \rangle$,
where $\mathcal{T}_{F_1}$ consists of a tile spanning all rows in $R$
and the columns $C=C_1\cup C_2\cup C_3\cup C_4$
whereas $\mathcal{T}_{F_2}$ consists of tiles $(R,
C_i)$, $i\in[4]$. These focus tiles allow us to investigate whether
 there are relations between these attribute groups,
while ignoring relations inside the attribute groups.

The view where the distributions characterised by the pair
$\mathcal{H}_{F, \emptyset}$ differ the most is shown in
Figure~\ref{fig:focus:c1}.  Again, we use green squares ($\square$) to
denote items belonging to the data sample corresponding to
$\mathcal{T}_u+\mathcal{T}_{F_1}$ and blue triangles ($\triangle$) to
denote items belonging to the data sample corresponding to
$\mathcal{T}_u+\mathcal{T}_{F_2}$. The points outside the focus area
are shown using a plus-sign ($+$). We notice a clear division into two
clusters, one on the left (marked in Figure~\ref{fig:focus:c1}) and
one on the right (marked in Figure~\ref{fig:focus:c2}). We now
investigate these two clusters in a similar fashion as we did before
when exploring the relations in the data. Based on the {\em Region}
and {\em Type} attributes for the clusters shown in
Table~\ref{tab:focus:clusters}, we conclude that Cluster 3 represents
rural districts in the North, South and West, whereas Cluster 4
represents rural districts in the East. Based on the parallel
coordinates plots in Figure~\ref{fig:focus:pcp:fc1} and
Figure~\ref{fig:focus:pcp:fc2} it is clear that the voting behaviour
is one aspect separating Cluster 3 and Cluster 4. In Cluster 4, the
support for the Left party is prominent. Also, the fraction of old
people in the population in Cluster 4 is larger, whereas the fraction
of children in the population is high in Cluster 3. We conclude that
there are interesting relations between the attribute groups
considered, which means, e.g., that there is a connection between
demographic properties and voting behaviour in the different rural
districts.

\begin{table}[h]
\centering
\begin{footnotesize}
\begin{subtable}{0.22\textwidth}
\begin{tabular}{ll}
  \toprule
  Region &    Type \\
  \midrule
East :  0   & Rural:233   \\
  North: 48   & Urban:  0   \\
  South:106   &  \\
  West : 79   &  \\
   \bottomrule
\end{tabular}
\caption{Cluster 3.}
  \label{tab:focus:c1}
\end{subtable}
\begin{subtable}{0.22\textwidth}
\begin{tabular}{ll}
  \toprule
  Region &    Type \\
  \midrule
East :64   & Rural:65   \\
  North: 0   & Urban: 0   \\
  South: 0   &  \\
  West : 1   &  \\
   \bottomrule
\end{tabular}
\caption{Cluster 4.}
  \label{tab:focus:c2}
\end{subtable}
\end{footnotesize}
\caption{{\em Region} and {\em Type} attributes for clusters observed in the data when focusing on a subset.}
  \label{tab:focus:clusters}
\end{table}

\subsection*{Case 2: Using background knowledge}
We could also have used our already observed background knowledge. Let
$t$ be a tile corresponding to Cluster 1 in
Table~\ref{tab:explore:c1}. We hence consider the hypothesis pair
$\mathcal{H}_{F, \{t\}}=\langle \{t\}+\mathcal{T}_{F_1},
\{t\}+\mathcal{T}_{F_2} \rangle$. Using these hypotheses we get the
view shown in Figure~\ref{fig:focus:ft}. This view is clearly
different
than Figure~\ref{fig:focus:c1}, since we
already were aware of the relations concerning the rural districts in the
East and this was included in our background knowledge. We hence conclude that
the background knowledge matters
when comparing hypotheses.

\begin{figure}[h]
    \centering
  \includegraphics[width=0.4\textwidth]{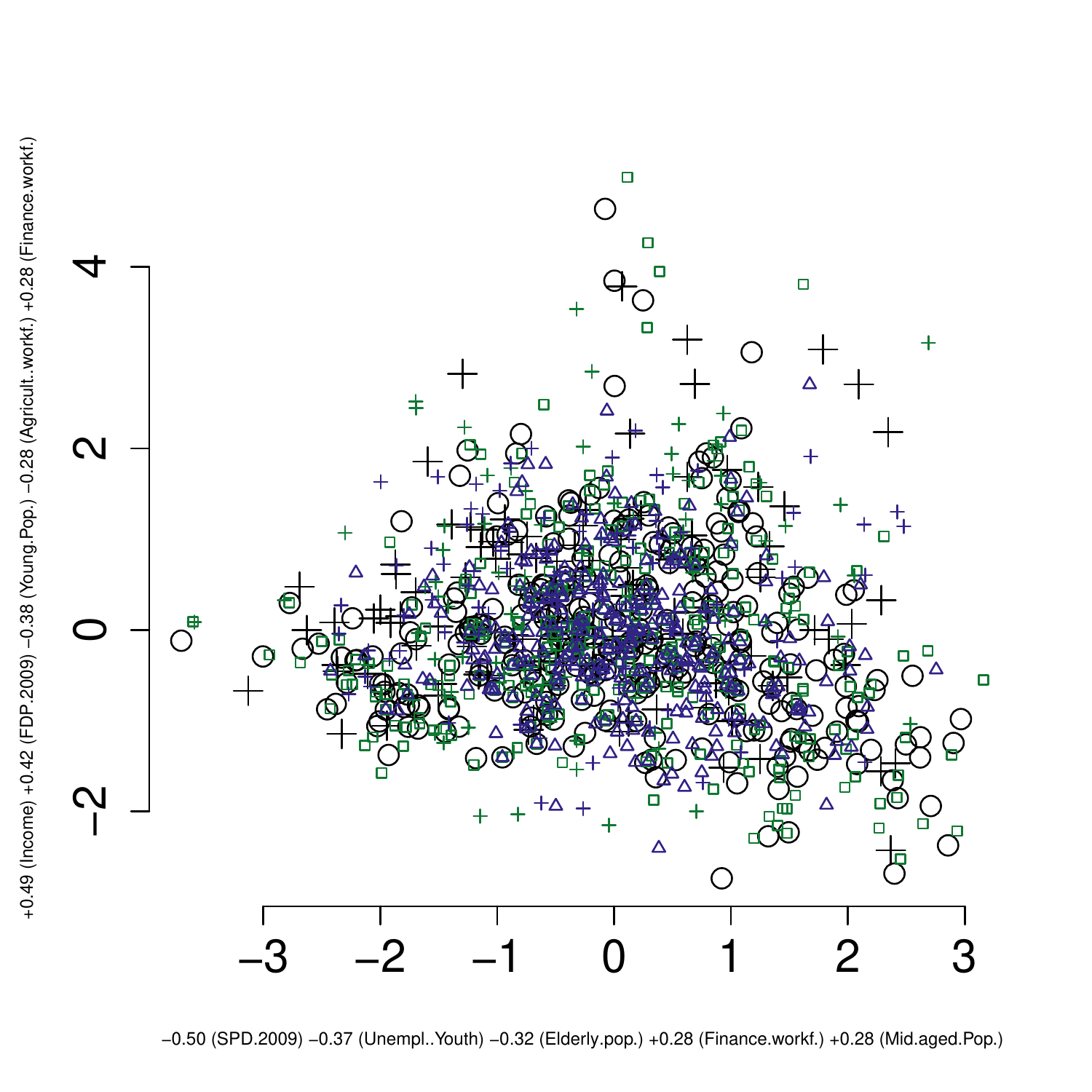}
  \caption{\small{ Initial view of the data using Cluster 1
      (Tab.~\ref{tab:explore:c1}) as background knowledge. Symbols as
      in Fig.~\ref{fig:focus:c1}.}}
  \label{fig:focus:ft}
\end{figure}

\end{appendices}

\end{document}